\documentclass[letterpaper]{article} 
\usepackage{aaai2026}  
\usepackage{times}  
\usepackage{helvet}  
\usepackage{courier}  
\usepackage[hyphens]{url}  
\usepackage{graphicx} 
\usepackage{natbib}  
\usepackage{caption} 
\usepackage{algorithm}
\usepackage{algpseudocode}
\usepackage{amsmath,amsthm,amssymb}
\usepackage{amsfonts}
\usepackage{tikz}
\usepackage{enumitem}
\usetikzlibrary{calc,decorations.pathmorphing}
\usepackage{booktabs}

\usepackage{multirow}

\newtheorem{proposition}{Proposition}

\usepackage{newfloat}
\usepackage{listings}
\DeclareCaptionStyle{ruled}{labelfont=normalfont,labelsep=colon,strut=off} 
\floatstyle{ruled}
\newfloat{listing}{tb}{lst}{}
\floatname{listing}{Listing}

\title{GONDOR to the Rescue: Satisficing Planning with Low Memory}
\author{
    Yonatan Vernik\textsuperscript{\rm 1},
    Alexander Tuisov\textsuperscript{\rm 2},
    Alexander Shleyfman\textsuperscript{\rm 1}
}
\affiliations{
    \textsuperscript{\rm 1}Computer Science Department, Bar-Ilan University\\
    \textsuperscript{\rm 2}Independent Researcher\\
    \{yonatanw55, queldelan\}@gmail.com,
   alexander.shleyfman@biu.ac.il
}

\begin{document}

\maketitle

\begin{abstract}
Greedy Best-First Search (GBFS) is the dominant approach for solving search problems where the goal can be estimated with a heuristic, such as planning, route finding, navigation, and pathfinding. This is especially true when the memory is tightly constrained, such as planning on edge devices. To alleviate that, we present GONDOR (Greedy Online Navigation with Dynamic Outpost-based Re-search), a memory-efficient extension of GBFS that allows search to continue under strict memory limits by periodically compressing the search tree while retaining a sparse set of anchor states, then upon reaching the goal reconstructs the path by re-searching between the sparse states. We analyze the algorithm and discuss several variants defined by different outpost selection policies. In addition, we explore using Bloom filters for compact duplicate detection in the closed list. Experiments across numeric planning domains and heuristic configurations show that GONDOR consistently improves coverage under low memory budgets compared to standard GBFS. We release the implementation of GONDOR and the Bloom-filter variant to facilitate further research on memory-efficient heuristic search.
\end{abstract}


\section{Introduction}

In the past two decades Greedy Best-First Search (GBFS) de facto became the state-of-the-art for obtaining fast solutions for satisficing planning, mostly due to strong empirical performance when guided by informative heuristics \cite{bonet-geffner:aij-2001,helmert:jair-2006,taitler-etal:aimag-2024}.  In this setting, GBFS maintains an open list ordered by a heuristics $h$ and repeatedly expands the node with the smallest value, generating successors that are inserted back into the open list while expanded nodes are moved to a closed list. 

This greedy selection strategy often leads to rapid progress toward a solution, but it also introduces significant memory demands because the algorithm stores all nodes. Since GBFS does not use path-cost information to prune the search, large plateaus of nodes with similar heuristic values may accumulate, causing the search frontier to grow substantially \cite{tatsuya-kishimoto:socs-2011}.

A further complication is that GBFS is highly sensitive to heuristic inaccuracies. If the heuristic misguides the search, the algorithm may devote extensive effort to regions of the state space that do not contribute to a solution, expanding many nodes that are ultimately irrelevant. This phenomenon has been documented in studies analyzing GBFS search behavior, which show that the algorithm may expand large sets of states depending on tie-breaking and heuristic structure, even when those states do not lie on promising paths \cite{hausner-etal:socs-2017,hausner-etal:ijcai-2018a}.

Because both expanded nodes and generated-but-unexpanded nodes must be retained in memory, this behavior can lead to substantial memory consumption. Thus, GBFS may therefore exhaust available memory even after making substantial progress toward a goal, particularly when the heuristic induces broad plateaus or misleading gradients that cause extensive exploration before a solution is reached \cite{hausner-etal:ijcai-2018b,kuroiwa-beck:icaps-2022}. This limitation motivates many extensions to GBFS, including diversification strategies~\cite{tatsuya-kishimoto:socs-2011} and multi-queue variants \cite{richter-westphal:jair-2010,xie-etal:icaps-2015}.

To alleviate this, we introduce GONDOR (Greedy Online Navigation with Dynamic Outpost-based Re-search), that adds a memory-bounding mechanism to GBFS. When memory limits are reached, the algorithm prunes the search tree, retaining only a sparse subset of states called outposts. GONDOR then resumes the search from these outposts, effectively managing memory while maintaining progress.

Once a goal is reached, the outposts along the discovered route define a sequence of beacons. In a second phase, a complete plan is reconstructed by solving smaller exact-state subproblems between consecutive beacons. We discuss outpost-selection policies and perform experiments with a naive Bernoulli policy, as well as with a Bloom Filter-based closed list to further reduce memory consumption, tested on both standard memory and low-memory settings.



Much of the work on memory-efficient heuristic search has focused on optimal planning. In contrast, we consider standard fully observable deterministic satisficing planning \cite{ghallab-et-al:2004}. The MA* algorithm  \cite{chakrabarti-etal:aij-1989,russell:ecai-1992} also frees memory by removing unpromising states, but does so incrementally by “de-growing” the search tree one node at a time. Frontier search \cite{korf-et-al:jacm-2005} discards entire portions of the search tree, similar in spirit to our approach; however it does so incrementally and reconstructs the solution using bidirectional search, whereas we we drop at once on memory pressure, and reconstruct using short re-searches with beacons.

\section{Background}
We consider search algorithms defined over a \textbf{state space}
\(
\mathcal{S}=\langle S, s_I, S_\star, \mathrm{succ} \rangle,
\)
where \(S\) is a non-necessarily finite set of states, \(s_I \in S\) is the initial state,
\(S_\star \subseteq S\) is the set of goal states, and
\(\mathrm{succ}: S \rightarrow 2^{S}\) is a successor function that maps each
state \(s \in S\) to the finite  set of its successor states.

A sequence of pairwise distinct states
\(
\langle s_0, s_1, \ldots, s_n\rangle
\)
is a path from \(s_0\) to \(s_n\) if
$s_i \in \mathrm{succ}(s_{i-1})$ for $i \in \{1,\ldots,n\}$ .
A path \(\langle s_0,\ldots,s_n\rangle\) is a $s$-plan if \(s_0=s\) and \(s_n \in S_\star\). 

\noindent\textbf{Heuristic} \(h: S \rightarrow \mathbb{R}_{\ge 0}\) is a  function. Typically, \(h(s)\) estimates the cost of a cheapest \(s\)-plan, but this is not a necessary condition in the satisficing setting. In this work, the heuristic is treated as an arbitrary black-box function that assigns a finite non-negative real value to each state.

\noindent\textbf{Greedy Best-First
Search (GBFS)} takes as input a state-space \(
\mathcal{S}\)  and returns an \(s_I\)-plan if one exists,
and \emph{unsolvable} otherwise. GBFS is guided by the assumption that states
with lower heuristic values are closer to a goal state. At each iteration,
the algorithm expands a generated state with minimum
heuristic value (taken from the \textsc{open} list), and terminates once a goal state is generated. To reduce memory consumption, all generated states are kept in the \textsc{closed} list, and no state is expanded twice. Due to greediness, GBFS provides no guarantee on the quality of the
resulting \(s_I\)-plan. 


\noindent\textbf{Bloom filter} is a compact probabilistic data structure with a constant memory use for approximate set membership. It supports insertions and membership queries, may yield false positives, and does not yield false negatives. For further info see \cite{10.1145/362686.362692, 10.1007/11841036_61}. In some of our experiments below we use Bloom filter to represent the closed list, reducing memory at the cost of occasionally pruning a previously unseen state.

\begin{figure*}[t]
\centering
\begin{tikzpicture}[
    x=1cm,y=1cm,
    line cap=round,line join=round,
    edge/.style={black, line width=0.32pt},
    strongedge/.style={black, line width=1.1pt},
    squig/.style={
        black, line width=0.45pt,
        decorate,
        decoration={snake, amplitude=0.9pt, segment length=4.5pt}
    },
    nodebase/.style={
        circle, draw=black, line width=0.45pt,
        minimum size=2.7mm, inner sep=0pt
    },
    reg/.style={nodebase, fill=white},
    outpost/.style={nodebase, fill=red!65},
    beacon/.style={nodebase, fill=blue!50},
    goal/.style={nodebase, fill=white},
    ghostgoal/.style={nodebase, draw=black!35, fill=white},
]

%

\begin{scope}[shift={(0,0)}]

\node[outpost, label=above:$s_0$] (r1) at (0,0) {};

\node[reg]     (a1) at (-0.95,-0.70) {};
\node[outpost] (b1) at ( 0.00,-0.70) {};
\node[reg]     (c1) at ( 0.95,-0.70) {};
\node[reg]     (e1) at (-0.45,-0.70) {};
\draw[edge] (r1)--(a1); \draw[edge] (r1)--(b1);
\draw[edge] (r1)--(c1); \draw[edge] (r1)--(e1);

\node[reg] (a2)  at (-1.10,-1.40) {};
\node[reg] (a2s) at (-0.78,-1.40) {};
\draw[edge] (a1)--(a2); \draw[edge] (a1)--(a2s);

\node[reg] (b2)   at (0.00,-1.40) {};
\node[reg] (b2s1) at (0.30,-1.40) {};
\draw[edge] (b1)--(b2); \draw[edge] (b1)--(b2s1);

\node[reg] (c1s1) at (0.72,-1.40) {};
\node[reg] (c1s2) at (1.18,-1.40) {};
\draw[edge] (c1)--(c1s1); \draw[edge] (c1)--(c1s2);

\node[reg] (e2) at (-0.45,-1.40) {};
\draw[edge] (e1)--(e2);

\node[reg] (a3)  at (-1.22,-2.10) {};
\node[reg] (a3s) at (-0.92,-2.10) {};
\draw[edge] (a2)--(a3); \draw[edge] (a2)--(a3s);

\node[reg] (b3)  at (0.00,-2.10) {};
\node[reg] (b3s) at (0.28,-2.10) {};
\draw[edge] (b2)--(b3); \draw[edge] (b2)--(b3s);

\node[outpost] (c2) at (0.95,-2.10) {};
\draw[edge] (c1s1)--(c2);

\node[outpost] (a4)   at (-1.32,-2.80) {};
\node[reg]     (a4s1) at (-1.02,-2.80) {};
\node[reg]     (a4s2) at (-1.62,-2.80) {};
\draw[edge] (a3)--(a4); \draw[edge] (a3)--(a4s1); \draw[edge] (a3)--(a4s2);

\node[outpost] (b4)   at (-0.20,-2.80) {};
\node[reg]     (b4s1) at ( 0.10,-2.80) {};
\node[reg]     (b4s2) at ( 0.40,-2.80) {};
\draw[edge] (b3)--(b4); \draw[edge] (b3)--(b4s1); \draw[edge] (b3s)--(b4s2);

\node[reg] (c3) at (1.10,-2.80) {};
\draw[edge] (c2)--(c3);

\node[reg] (a5) at (-1.42,-3.50) {};
\draw[edge] (a4)--(a5);

\node[reg] (c4) at (1.10,-3.50) {};
\draw[edge] (c3)--(c4);

\node[ghostgoal, label=below:$s_{\mathrm{goal}}$] (g1) at (0.25,-5.60) {};

\draw[black!45, dashed, line width=0.35pt] (1.6,0.35) -- (1.6,-6.10);

\end{scope}

\begin{scope}[shift={(3.40,0)}]

\node[outpost, label=above:$s_0$] (r2) at (0,0) {};
\node[outpost] (b12) at ( 0.00,-0.70) {};
\node[outpost] (a42) at (-1.32,-2.80) {};
\node[outpost] (b42) at (-0.20,-2.80) {};
\node[outpost] (c22) at ( 0.95,-2.10) {};

\draw[squig] (r2)--(b12);
\draw[squig] (r2)--(a42);
\draw[squig] (b12)--(b42);
\draw[squig] (r2)--(c22);

\node[ghostgoal, label=below:$s_{\mathrm{goal}}$] (g2) at (0.25,-5.60) {};

\draw[black!45, dashed, line width=0.35pt] (1.6,0.35) -- (1.6,-6.10);

\end{scope}

\draw[black!45, dashed, line width=0.35pt] (1.6,0.35) -- (1.6,-6.10);

\begin{scope}[shift={(6.80,0)}]

\node[outpost, label=above:$s_0$] (r3) at (0,0) {};
\node[outpost] (b13) at ( 0.00,-0.70) {};
\node[outpost] (a43) at (-1.32,-2.80) {};
\node[outpost] (b43) at (-0.20,-2.80) {};
\node[outpost] (c23) at ( 0.95,-2.10) {};

\draw[squig] (r3)--(b13);
\draw[squig] (r3)--(a43);
\draw[squig] (b13)--(b43);
\draw[squig] (r3)--(c23);

\node[goal, label=below:$s_{\mathrm{goal}}$] (g3) at (0.25,-5.60) {};

\node[reg] (b23)   at ( 0.00,-1.40) {};
\node[reg] (b2s13) at ( 0.30,-1.40) {};
\draw[edge] (b13)--(b23); \draw[edge] (b13)--(b2s13);

\node[reg] (a53) at (-1.42,-3.50) {};  
\draw[edge] (a43)--(a53);
\node[reg] (a63) at (-1.42,-4.20) {};  
\node[reg] (a73) at (-1.10,-4.20) {};
\draw[edge] (a53)--(a63); \draw[edge] (a53)--(a73);

\node[reg]     (c33) at (1.10,-2.80) {};  
\node[outpost] (c43) at (1.10,-3.50) {};  
\draw[edge] (c23)--(c33); \draw[edge] (c33)--(c43);

\node[reg] (b53) at (-0.50,-3.50) {};  
\node[reg] (b63) at (-0.20,-3.50) {};
\node[reg] (b73) at ( 0.10,-3.50) {};
\draw[edge] (b43)--(b53); \draw[edge] (b43)--(b63); \draw[edge] (b43)--(b73);

\node[reg]     (b83)  at (-0.50,-4.20) {};  
\node[outpost] (b93)  at (-0.20,-4.20) {};
\node[reg]     (b103) at ( 0.10,-4.20) {};
\draw[edge] (b53)--(b83); \draw[edge] (b53)--(b93); \draw[edge] (b53)--(b103);

\node[reg] (b113) at (-0.50,-4.90) {};  
\node[reg] (b123) at (-0.20,-4.90) {};
\draw[edge] (b93)--(b113); \draw[edge] (b93)--(b123);

\draw[edge] (b123)--(g3);  

\draw[black!45, dashed, line width=0.35pt] (1.6,0.35) -- (1.6,-6.10);

\end{scope}

\begin{scope}[shift={(10.20,0)}]

\node[beacon, label=above:$s_0$] (r4) at (0,0) {};
\node[beacon] (b14) at ( 0.00,-0.70) {};
\node[beacon] (b44) at (-0.20,-2.80) {};

\draw[squig] (r4)--(b14);
\draw[squig] (b14)--(b44);

\node[goal, label=below:$s_{\mathrm{goal}}$] (g4) at (0.25,-5.60) {};

\node[reg]     (b54)  at (-0.50,-3.50) {};
\node[outpost] (b94)  at (-0.20,-4.20) {};
\node[reg]     (b124) at (-0.20,-4.90) {};
\draw[strongedge] (b44)--(b54);
\draw[strongedge] (b54)--(b94);
\draw[strongedge] (b94)--(b124);
\draw[strongedge] (b124)--(g4);

\node[font=\tiny, rotate=90] at ($(r4)!0.5!(b14)+(-0.18,0)$) {GBFS};
\node[font=\tiny, rotate=84] at ($(b14)!0.5!(b44)+(-0.18,0)$) {GBFS};

\draw[black!45, dashed, line width=0.35pt] (1.6,0.35) -- (1.6,-6.10);

\end{scope}

\begin{scope}[shift={(13.60,0)}]

\node[beacon, label=above:$s_0$] (r5)   at ( 0.00, 0.00) {};
\node[beacon]  (b15)  at ( 0.00,-0.70) {};
\node[reg]     (b25)  at ( 0.00,-1.40) {};
\node[reg]     (b35)  at ( 0.00,-2.10) {};
\node[beacon]  (b45)  at (-0.20,-2.80) {};
\node[reg]     (b55)  at (-0.50,-3.50) {};
\node[outpost] (b95)  at (-0.20,-4.20) {};
\node[reg]     (b125) at (-0.20,-4.90) {};
\node[goal, label=below:$s_{\mathrm{goal}}$] (g5) at (0.25,-5.60) {};

\draw[strongedge] (r5)  -- (b15);
\draw[strongedge] (b15) -- (b25);
\draw[strongedge] (b25) -- (b35);
\draw[strongedge] (b35) -- (b45);
\draw[strongedge] (b45) -- (b55);
\draw[strongedge] (b55) -- (b95);
\draw[strongedge] (b95) -- (b125);
\draw[strongedge] (b125)-- (g5);

\end{scope}
 
\end{tikzpicture}
\caption{Illustration of GONDOR. Our method starts by searching with GBFS expansion order (left). Some search nodes, always including $s_0$, are designated as outposts (in red). Once memory runs out, all the nodes are cleared except for the outposts, connected by $outpost\_parent$ pointers (left middle). Search continues from there (middle) until a goal is reached, with repeated clears every time the memory limit is reached. Once a goal is reached, we trace along the path until $parent=\bot$, then trace along $outpost\_parent$ to the start (right middle). Outposts along the missing segments are promoted to beacons (blue) and the rest of the tree is dropped. Finally, the missing path segments are determined by re-searching with GBFS and the results are concatenated, along with the plan suffix before, to create the final path (right).}
\label{fig:gondor-overview}
\end{figure*}

\section{Search with Limited Memory}

We introduce two intermediate observations that motivate the design of
GONDOR. These methods illustrate different ways of adapting GBFS to operate under strict memory limits. Consider the following algorithms. 

\noindent\textbf{GBFS-U (Unlimited)}  is a variant of GBFS that does not terminate when the
memory bound is exceeded. Instead, after each node expansion, if memory usage
surpasses a predefined threshold, the algorithm repeatedly removes the oldest
node from the search tree (and associated data structures) until the memory
usage falls below the threshold. The search then continues from the remaining
frontier.

Because nodes may be removed during search, parent pointers required for
solution reconstruction may be lost. Thus, GBFS-U may only
verify the existence of a solution, but it does
not necessarily preserve a complete plan.

\noindent\textbf{GBFS-R (Retrace)} extends GBFS-U by exploiting the
observation that when a goal state is reached, a suffix of the plan remains intact from a surviving ancestor to the goal. The algorithm iteratively reconstructs the full plan by repeatedly running
GBFS-U: each run searches from the initial state to the
earliest surviving ancestor of the previously recovered suffix. The recovered suffixes are concatenated until a complete plan is obtained.

This approach preserves the reduced memory footprint of
GBFS-U while enabling plan reconstruction. However, it may
require multiple re-searches from the initial state, which can substantially
increase runtime.

\noindent\textbf{GONDOR} Algorithm~\ref{alg:gondor} modifies the retracing procedure
by replacing repeated searches from the initial state with searches over shorter, non-overlapping segments. During the initial search, a subset of generated nodes is designated as \emph{outposts}. Each node additionally stores a pointer to its nearest ancestral outpost; the initial state is always an outpost.

Search proceeds similarly to GBFS until memory usage exceeds a threshold $L$
at which point a cleanup procedure \textsc{RemoveNonOutposts} is triggered.
During cleanup, all non-outpost nodes are discarded (this includes the search tree, the \textsc{open}  list,  and the  and the \textsc{closed} list). The search then resumes from the remaining outposts. Ordinary parent
pointers of outposts are reset, while outpost-parent pointers are preserved.
This effectively compresses the explored search history into a sparse chain of
anchoring states.

When a goal is found, GONDOR first extracts the remaining suffix of the plan using standard parent pointers.\footnote{This suffix always terminates
at an outpost, since after each cleanup only outposts remain in the \textsc{open} list.} The algorithm then follows outpost-parent pointers to obtain an ordered
sequence of outposts along the discovered route, referred to as
\emph{beacons}. The search tree is subsequently cleared, and a second phase reconstructs the full plan by re-solving the subproblems between consecutive beacons and concatenating the resulting segments with the retained suffix.

GONDOR includes a function hyper-parameter \textit{IsOutpost}, which determines whether a generated node is marked as an outpost.  Here, we use a simple Bernoulli policy in which each node independently becomes an outpost
with probability \(p\). Smaller values of \(p\) increase the effective memory available to the search, but may increase the cost of re-expansion after cleanup and the time required for plan reconstruction.
Empirically, the reconstruction time  for $p=0.01$ was typically small relative to the
initial search.

\paragraph{Theoretical Properties} Note that GBFS is complete on finite search spaces and does not have such guarantees on infinite ones. Since no algorithm can solve a plan existence problem within constant memory,  we analyze GONDOR under a setting where the memory available for each iteration is bounded by a constant \(L\), where $L$ is measured in the number of newly generated nodes.

Consider the stochastic outpost policy $\textsc{IsOutpost} \sim \mathrm{Ber}(p)$.
When $p=0$ and there is no threshold $L$, the algorithm reduces to the execution of GBFS. In this setting, assuming the same $h$ and tie breaking the algorithm returns the same solution as GBFS.

For $p>0$, additional memory is allocated to storing outposts, reducing the effective memory available to the main search. However, after each cleanup, the search restarts from a frontier containing the initial state and a subset of previously explored states, preserving soundness.

If GBFS with unbounded memory follows a solution path, then GONDOR expands progressively longer prefixes of that path. Since each generated node is selected as an outpost with probability $p>0$, the probability that none of the prefix states is retained decreases exponentially with its length. Thus, progress to the goal is preserved across cleanups. When later states have lower heuristic values, cleanup primarily removes less promising regions.

After a goal is found, only the beacons and final suffix are retained. Reconstruction solves the segments between consecutive beacons in reverse order. Each segment corresponds to a region previously explored under at least the same memory conditions, so no more states are expanded than in the original search. Hence, reconstruction proceeds without additional memory overflows under the same heuristic and tie-breaking assumptions.

\begin{proposition}
Given a bound \(L\) on the number of newly generated nodes per cleanup phase and
\(p>0\), \textsc{GONDOR} is almost surely complete on finite state spaces.
\end{proposition}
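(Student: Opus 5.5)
We must show that whenever an $s_I$-plan exists in a finite state space, \textsc{GONDOR} with $p>0$ and per-phase bound $L$ returns one with probability one. Otherwise it correctly reports \emph{unsolvable}. The argument splits into the two phases of the algorithm: the forward search must reach a goal almost surely, and the reconstruction must then succeed. We take ``almost surely'' to mean that the set of infinite runs that never find a goal has probability zero. We also assume, as the statement implicitly requires, that $L$ exceeds the eventual number of outposts plus the branching factor, so every phase makes at least one expansion.

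\textbf{Step 1 (soundness).} Every node in \textsc{open} is reachable from $s_I$: it is either $s_I$, a generated successor, or a retained outpost that was itself generated earlier. Hence any goal found lies at the end of a genuine path $s_I \leadsto s_{\mathrm{goal}}$ obtained by concatenating outpost-parent links and parent links. If \textsc{open} is exhausted without cleanup, the usual GBFS argument applies, since the \textsc{closed} list has covered everything reachable from the current frontier, which contains $s_I$.

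\textbf{Step 2 (progress via outposts).} Because $S$ is finite, the set of outposts is monotone nondecreasing: outposts are never discarded, and once they are unexpanded they re-enter \textsc{open} after each cleanup. It can therefore change only finitely often. The key claim is a renewal argument. Consider any phase in which no goal is found. By the bound $L$, that phase generates at least one new node, and with probability at least $p$ some generated node becomes an outpost. If that node was not already an outpost, the outpost set strictly grows.

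\textbf{Step 3 (the main obstacle).} We must rule out the possibility that the algorithm cycles forever, regenerating only states that are already outposts. Suppose the outpost set $O$ stabilizes. Then each phase restarts from the identical frontier $O$ with an empty closed list. Given the same $h$ and tie-breaking, each phase therefore performs the identical expansion sequence up to the memory bound.

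If some non-outpost state is generated in that sequence, then every phase draws an independent Bernoulli trial on it. It becomes an outpost with probability at least $p$ per phase, so $O$ stabilizes with that state excluded with probability $\lim_k (1-p)^k = 0$. Otherwise every state generated from $O$ is already in $O$, so $O$ is closed under $\mathrm{succ}$ and contains $s_I$. Then $O$ contains all reachable states, including a goal if one is reachable. That goal would have been generated, giving a contradiction.

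Since $O$ can grow at most $|S|$ times and each stall has probability zero of lasting forever, a countable union of null events shows that the forward phase terminates almost surely. It ends either with a goal or with an exhausted frontier. The delicate points are justifying determinism of the phases and independence of the Bernoulli draws across phases; I would formalize this with a finite Markov chain on outpost sets.

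\textbf{Step 4 (reconstruction).} Each consecutive beacon pair $(b_i, b_{i+1})$ satisfies $b_{i+1}$ reachable from $b_i$, witnessed by the original forward run. Each re-search is a GBFS toward a single target state in a finite space, so it is complete and finds a path. If memory is again bounded, we apply Steps 2 and 3 recursively to the subproblem; this recursion terminates because the number of beacons is finite. Concatenating the segments with the retained suffix yields an $s_I$-plan, which completes the argument.
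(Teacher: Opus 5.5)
Your proof is correct and follows essentially the same route as the paper's: the outpost set only grows, a phase that adds no new outpost is repeated identically so the fresh $\mathrm{Ber}(p)$ draws make an infinite stall a null event, and finiteness of $S$ then forces either a found goal or an outpost set covering every reachable state, after which \textsc{open} is exhausted. The paper's proof stops at termination of the forward search, so your Step~4 is extra. The plain GBFS re-searches suffice there, whereas termination of your bounded-memory recursion does not follow from finiteness of the beacon set alone, since each recursive call creates new beacons. Your hypothesis on $L$ is also unnecessary: $L$ counts only newly generated nodes, and \textsc{closed} is reset to the outpost states rather than emptied, so outposts are never regenerated.
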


\begin{proof}
The algorithm terminates when a goal is found or when the \textsc{open} list is empty. Since the initial state is always an outpost, an empty \textsc{open} list implies that all reachable states were explored and the instance is unsolvable.

Between two cleanup phases, at most \(L\) nodes are generated, and each is marked as an outpost independently with probability \(p>0\). Hence, the probability of adding at least one new outpost in a phase is strictly positive. If no outpost is added, the same phase repeats. Therefore, with probability~1, after finitely many phases at least one additional state is added to the outpost set.

Because the state space is finite, eventually either a goal is found or all reachable states become outposts. In the latter case, no nodes are deleted and the remaining search exhausts the \textsc{open} list, reporting failure. Thus, \textsc{GONDOR} is almost surely complete.
\end{proof}

\begin{algorithm}[!t]
\caption{GONDOR}
\label{alg:gondor}
\begin{algorithmic}[1]
\State \textbf{Input:} initial state $s_0$, goal test $S_{\ast}$, heuristic $h$, successor function $\textsc{Succ}$, memory limit $L$
\State \textbf{Input:} outpost policy $\textsc{IsOutpost}$
\State \textbf{Output:} plan $\pi$ or failure
\State $r \leftarrow \textsc{NewNode}(s_0,\bot)$
\State $r.po \leftarrow \bot$; $r.is\_outpost \leftarrow True$ // po is parent\_outpost
\State $\textsc{open} \leftarrow \{r\}$; $\textsc{closed} \leftarrow \{s_0\}$
\While {$\textsc{open} \neq \emptyset$}
    \State $n \leftarrow$ pop node from $\textsc{open}$ according to $h$
    \If{$n.state \in S_{\ast}$}
        \State \textbf{return} \textsc{ReconstructViaBeacons}$(n, h)$
    \EndIf
        \For{each $s' \in \textsc{Succ}(n.state)$}
        \If{$s' \in \textsc{closed}$}
            \State \textbf{continue}
        \EndIf
        \State $m \leftarrow \textsc{NewNode}(s', n)$
        \State $m.po \leftarrow n~~\mathit{\textbf{if}}~~n.is\_outpost~~\mathit{\textbf{else}}~~n.po$
        
        \State{$m.is\_outpost \leftarrow \textsc{IsOutpost}(m)$}
        \State $\textsc{open} \leftarrow \textsc{open} \cup \{m\}$
        \State $\textsc{closed} \leftarrow \textsc{closed} \cup \{m.state\} $
    \EndFor

    \If{memory usage exceeds $L$}
        \State $\textsc{open} \leftarrow \textsc{RemoveNonOutposts}(\textsc{open})$
        \State $\textsc{closed} \leftarrow \textsc{open}.state$
    \EndIf
\EndWhile
\State \textbf{return} failure
\end{algorithmic}
\end{algorithm}
\section{Experimental results}

The experiments were conducted on a 13th Gen Intel®
Core™ i9-13900 processor running at 2.00 GHz, supported
by a 64-bit operating system and 32 GB of RAM. To evaluate our search algorithm on a diverse set of heuristics and domains, we follow \citet{esg_planning:corr-2025} to generate 10 domain-specific heuristics\footnote{The input to the LLM includes the domain and transition rules, but not the goal or the initial state which are different per instance} using GPT-5.1 with high reasoning effort and the default \textit{temperature}=1.0 and \textit{top\_p}=1.0. We have evaluated our approach on the domains from the Numeric International Planning Competition (IPC) 2023 \citep{taitler-etal:aimag-2024} as well as on the two domains introduced in \citet{esg_planning:corr-2025}; Twin-Primes and Pacman.

We ran each heuristic on each instance with a time limit of 10 minutes and a RAM limit of 8 GB (or 512 MB for low-memory experiments). We searched with the following settings: GBFS, GBFS$_{B\!F}$, GONDOR, and GONDOR$_{B\!F}$. All GONDOR settings use $p=0.01$ as described above, while the Bloom Filter settings used k=23 hash functions and 400MiB of RAM (fpr=$10^{-7}$ for set membership check at $10^8$ nodes)  for 8 GB limit, and k=20 and 3.5MiB RAM (fpr=$10^{-6}$ for set membership check at $10^6$ nodes) for 512 MB limit setting \cite{10.1007/11841036_61}.

Table \ref{tab:coverage} shows the coverage results. On the best heuristics all suggested methods equal or beat the baseline on all domains in both memory settings. For 8GB, gains are primarily in Counters (+4) and Drone (+1), while for 512 MB gains are more spread out including (+7) on TPP. When considering all heuristics, the suggested methods no longer dominate per-domain and every method has some domain it is best on, but GONDOR$_{B\!F}$ wins on most of the domains and overall.

As shown in Figure~\ref{fig:runtime}, GONDOR is slightly slower, though the gap decreases with increasing problem size. While GBFS frequently exhausts available memory after roughly 30--200 seconds (10--40 under tighter limits), GONDOR continues beyond this point and often succeeds. Memory cleanups performed by GONDOR also sometimes reduce runtime by discarding unpromising regions. The trade-off is that when GONDOR fails, it typically consumes a larger portion of the time budget, whereas GBFS often fails earlier. For detailed results see Appendix.


\begin{table}[!t]
    \centering
    \footnotesize
    \setlength{\tabcolsep}{1.5pt}
    \begin{tabular}{@{}cl|rrrr@{}}
    \toprule
    & \textbf{Metric} & {\scriptsize GBFS} & {\scriptsize GBFS$_{B\!F}$} & {\scriptsize GONDOR} & {\scriptsize GONDOR$_{B\!F}$} \\
    \midrule
    \multirow{2}{*}{\rotatebox{90}{ \tiny 8 GB}}
    &{\scriptsize Max Coverage (400)}   & 302  & 302  & 307  & 308  \\
    &{\scriptsize Sum Coverage (4000)}  & 2339 & 2377 & 2385 & 2406 \\
    \midrule
    \multirow{2}{*}{\rotatebox{90}{\tiny 512 MB}}
    &{\scriptsize Max Coverage (400)}   & 285  & 290  & 296    & 298  \\
    &{\scriptsize Sum Coverage (4000)}  & 2000 & 2060 & 2099   & 2196 \\
    \bottomrule
    \end{tabular}
    \caption{\label{tab:coverage}\textit{Max} sums the coverage of the \textbf{best} heuristic in each domain, while \textit{Sum} sums the coverage of \textbf{all} heuristics.}
\end{table}

\begin{figure}[!t]
    \includegraphics[width=0.48\textwidth, height=0.28\textheight, keepaspectratio]{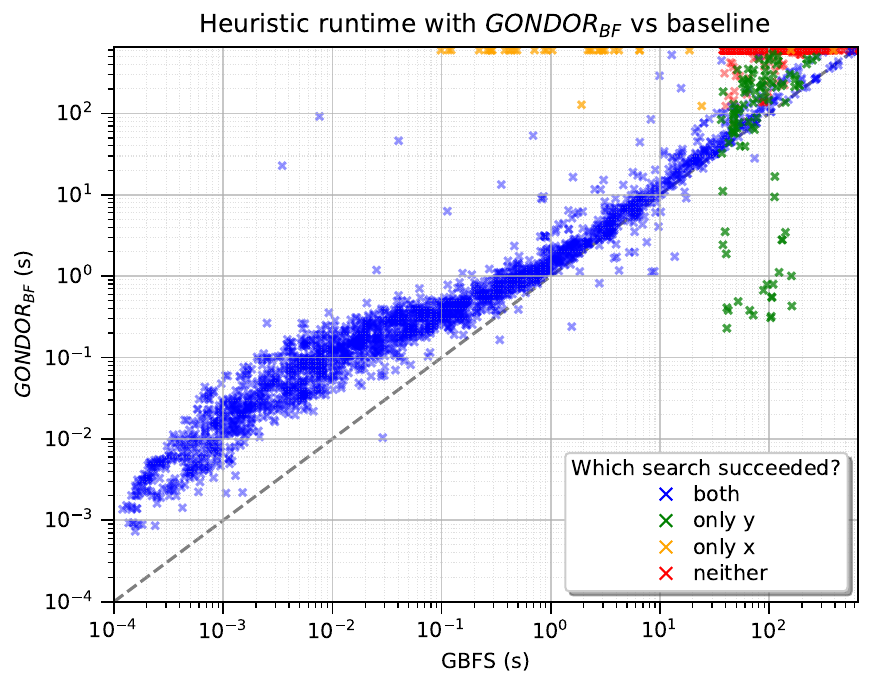}
    \includegraphics[width=0.48\textwidth, height=0.28\textheight, keepaspectratio]{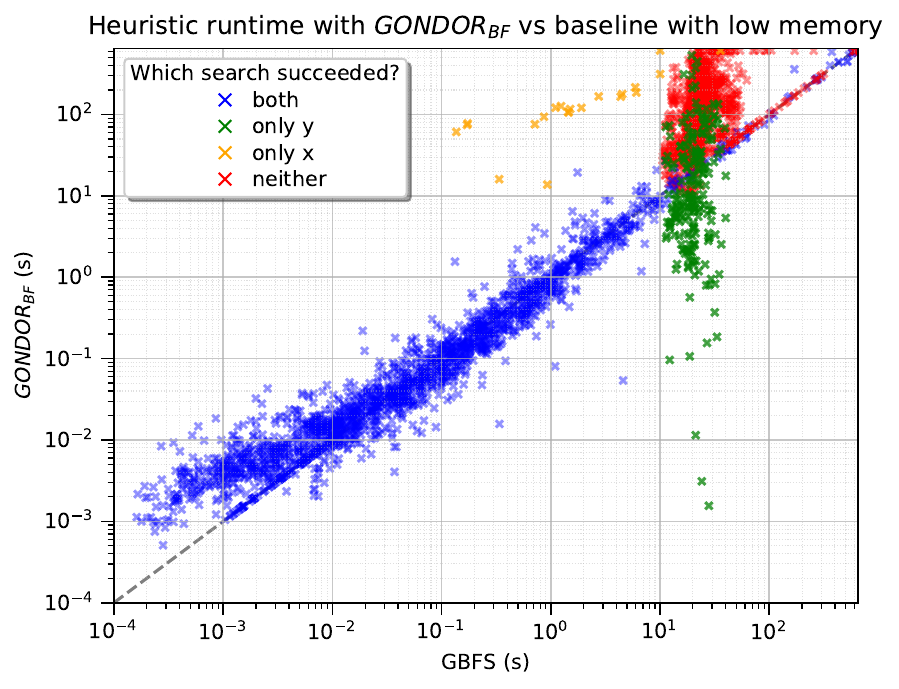}
    \caption{\label{fig:runtime} Runtime comparison of each heuristic and instance for GONDOR$_{B\!F}$ versus the GBFS baseline. Each point corresponds to one run and is colored by which method solved the instance. Points below the diagonal indicate faster performance of GONDOR$_{B\!F}$; points above indicate faster GBFS. Green points denote instances solved by GONDOR$_{B\!F}$, and orange points those solved by GBFS.}
\end{figure}

\section{Conclusion and future work}

We present a new algorithm, GONDOR, which extends GBFS for memory efficient satisficing search, and evaluate a simple variation on numeric planning problems. The experimental results show that even with our naive hyperparameter selection GONDOR yields consistent benefits to coverage which is further improved by using a Bloom Filter as \textsc{closed} list, and the benefit is $\sim3\times$ larger for low-memory settings, at the cost of \textit{failing slower} when it does fail.

Future work should consider better informed \textit{IsOutpost} functions. We find particularly promising the idea of increasing outpost probability $p$ according to the heuristic value of the state relative to $h(s_0)$ and the best achieved so far. Another variation could, instead of determining outpostiness already when generated, only promote nodes to outposts during cleanup.



\bibliography{aaai2026}


\end{document}


\maketitle
\section{algorithms}
We reproduce here pseudocode for all algorithms we describe.

\FloatBarrier

\subsection{GBFS-unlimited}
We reproduce here the GBFS-Unlimited algorithm.

\begin{algorithm}[!ht]
\caption{GBFS-Unlimited}
\label{alg:gbfs-unlimited}
\begin{algorithmic}[1]
\State \textbf{Input:} initial state $s_0$, goal condition $G$, heuristic $h$, successor function $\textsc{Succ}$, memory limit $L$
\State \textbf{Output:} $(status, n)$
\State create root node $r$ for $s_0$
\State $r.parent \leftarrow \bot$; $r.action \leftarrow \bot$; $r.time \leftarrow 0$
\State $\mathit{OPEN} \leftarrow \{r\}$; $\mathit{CLOSED} \leftarrow \{r\}$; $t \leftarrow 1$
\While{$\mathit{OPEN} \neq \emptyset$}
    \State $n \leftarrow$ pop best node from $\mathit{OPEN}$
    \If{$n.state \models G$}
        \State \textbf{return} $(\textsc{success}, n)$
    \EndIf
    \For{each $(a, s') \in \textsc{Succ}(n.state)$}
        \If{$s' \in \mathit{CLOSED}$}
            \State \textbf{continue}
        \EndIf
        \State create child node $m$ for $s'$
        \State $m.parent \leftarrow n$; $m.action \leftarrow a$; $m.time \leftarrow t$
        \State $t \leftarrow t + 1$
        \State insert $m$ into $\mathit{OPEN}$
        \State insert $m$ into $\mathit{CLOSED}$
    \EndFor
    \While{memory usage exceeds $L$ and $\mathit{OPEN} \neq \emptyset$}
        \State remove from $\mathit{OPEN}$ and from $\mathit{CLOSED}$ the node with smallest $time$
    \EndWhile
\EndWhile
\State \textbf{return} $(\textsc{failure}, \bot)$
\end{algorithmic}
\end{algorithm}

\FloatBarrier
\vspace{7em}

\subsection{GBFS-retrace}
We reproduce here the GBFS-Retrace algorithm.

\begin{algorithm}[!ht]
\caption{GBFS-Retrace}
\label{alg:gbfs-retrace}
\begin{algorithmic}[1]
\State \textbf{Input:} initial state $s_0$, goal condition $G$, heuristic $h$, successor function $\textsc{Succ}$, memory limit $L$
\State \textbf{Output:} plan $\pi$ or failure
\State $\pi \leftarrow [\,]$
\While{$True$}
    \State $(status, n) \leftarrow GBFS-Unlimited(s_0, G, h, SUCC, L)$
    \If{status is FAILURE}
        \State break
    \EndIf
    \While{$n.parent \neq None$}
        \State $push~n.action~into~\pi$
        \State $n \leftarrow n.parent$
    \EndWhile
    \If{$n.state == s_0$}
        \State \Comment{we reverse plan since it was pushed end first}
        \State \textbf{return} $REVERSE(\pi)$
    \EndIf

    \State $G \leftarrow lambda~s\in S: s == n$
\EndWhile
\State \textbf{return} $\textsc{failure}$
\end{algorithmic}
\end{algorithm}

\FloatBarrier
\newpage
\subsection{GONDOR}
The algorithm is already in the paper, but it's repeated here for proximity.

\begin{algorithm}[!ht]
\caption{GONDOR}
\label{alg:gondor in appendix}
\begin{algorithmic}[1]
\State \textbf{Input:} initial state $s_0$, goal test $S_{\ast}$, heuristic $h$, successor function $\textsc{Succ}$, memory limit $L$
\State \textbf{Input:} outpost policy $\textsc{IsOutpost}$
\State \textbf{Output:} plan $\pi$ or failure
\State $r \leftarrow \textsc{NewNode}(s_0,\bot)$
\State $r.po \leftarrow \bot$; $r.is\_outpost \leftarrow True$ // po is parent\_outpost
\State $\textsc{open} \leftarrow \{r\}$; $\textsc{closed} \leftarrow \{s_0\}$
\While {$\textsc{open} \neq \emptyset$}
    \State $n \leftarrow$ pop node from $\textsc{open}$ according to $h$
    \If{$n.state \in S_{\ast}$}
        \State \textbf{return} \textsc{ReconstructViaBeacons}$(n, h)$
    \EndIf
        \For{each $s' \in \textsc{Succ}(n.state)$}
        \If{$s' \in \textsc{closed}$}
            \State \textbf{continue}
        \EndIf
        \State $m \leftarrow \textsc{NewNode}(s', n)$
        \State $m.po \leftarrow n~~\mathit{\textbf{if}}~~n.is\_outpost~~\mathit{\textbf{else}}~~n.po$
        \State{$m.is\_outpost \leftarrow \textsc{IsOutpost}(m)$}
        \State $\textsc{open} \leftarrow \textsc{open} \cup \{m\}$
        \State $\textsc{closed} \leftarrow \textsc{closed} \cup \{m.state\} $
    \EndFor

    \If{memory usage exceeds $L$}
        \State $\textsc{open} \leftarrow \textsc{RemoveNonOutposts}(\textsc{open})$
        \State $\textsc{closed} \leftarrow \textsc{open}.state$
    \EndIf
\EndWhile
\State \textbf{return} failure
\end{algorithmic}
\end{algorithm}

\FloatBarrier

\subsection{ReconstructViaBeacons}
We reproduce here the ReconstructViaBeacons algorithm.

\vspace{2em}

\begin{algorithm}[!t]
\caption{\textsc{ReconstructViaBeacons}}
\label{alg:reconstruct}
\begin{algorithmic}[1]
\State \textbf{Input:} goal node $n$, heuristic $h$
\State \textbf{Output:} complete plan $\pi$
\State $\pi_{\mathit{suffix}} \leftarrow [\,]$; $\mathcal{B} \leftarrow [\,]$
\Comment{store the suffix since last clear}
\While{$n.parent \neq \bot$}
    \State push $n.action$ to $\pi_{\mathit{suffix}}$
    \State $n \leftarrow n.parent$
\EndWhile
\Comment{light the beacons}
\State $\pi_{\mathit{suffix}} \leftarrow REVERSE(\pi_{\mathit{suffix}})$
\While{$n \neq \bot$}
    \State push $n.state$ to $\mathcal{B}$
    \State $n \leftarrow n.parent\_outpost$
\EndWhile

\Comment{follow the beacons. NOTE: they were added in reverse order}
\For{$i = 1$ to $|\mathcal{B}| - 1$}
    \State $G \leftarrow lambda~s\in S: s == \mathcal{B}[i]$
    \State $\pi_i \leftarrow \textsc{GBFS}(s_0=\mathcal{B}[i+1], G, h)$
    \State prepend $\pi_i$ to $\pi_{\mathit{suffix}}$
\EndFor
\State \textbf{return} $\pi_{\mathit{suffix}}$
\end{algorithmic}
\end{algorithm}

\section{definitions}

\begin{definition}[Weak Landmark Set]
Let $\mathcal{S} = \langle S, s_I, S^\star, \mathrm{succ} \rangle$ be a state space. A set $W \subseteq S$ is a
\emph{weak landmark set} for $\mathcal{S}$ if there exists an $s_I$-plan
$\pi = \langle s_0, s_1, \ldots, s_n \rangle$ such that
$W \subseteq \{s_0, s_1, \ldots, s_n\}$.
\end{definition}

\noindent This contrasts with the classical notion of a strong landmark, which is an action or fact that must appear in \emph{every} $s_I$-plan. A weak landmark set relaxes
this requirement to existence: suffices that all its elements are collectively visited by at least one plan.

\begin{definition}[Beacon Sequence]
A \emph{beacon sequence} for $\mathcal{S}$ is an ordered tuple
$\mathcal{B} = (b_0, b_1, \ldots, b_k)$ of states satisfying:
\begin{enumerate}
    \item $\{b_0, b_1, \ldots, b_k\}$ is a weak landmark set for $\mathcal{S}$, and
    \item there exists an $s_I$-plan $\langle s_0, s_1, \ldots, s_n \rangle$ in which the
    beacons appear in order: there exist indices
    $0 \leq i_0 < i_1 < \cdots < i_k \leq n$ such that
    $s_{i_j} = b_j$ for all $j \in \{0, \ldots, k\}$.
\end{enumerate}
\end{definition}

\noindent Intuitively, a beacon sequence is a weak landmark set whose order we know along that plan. Every weak landmark set has at least one Beacon sequence, which can be constructed by indexing the states along $\pi$. In GONDOR, phase~1 constructs a valid $\pi$, for which the outposts (or in fact, any subset of states indexed along it) are a valid beacon sequence.

\paragraph{Analogy to Weak and Strong Stubborn Sets.}
The weak/strong relation introduced above for landmark sets mirrors an analogous relation in the stubborn-set literature~\cite{valmari1989stubborn,sievers2021weak}. Stubborn sets are a partial-order-reduction technique that prunes the successors considered at each search state. \emph{Strong} stubborn sets (SSS) impose the requirement that every optimal plan in the original space corresponds to a plan of equal cost in the pruned space---a universal requirement over the entire solution set.

\emph{Weak} stubborn sets (WSS), as defined by Valmari in the original formulation and recently revisited for classical planning by \citet{sievers2021weak}, relax this to an existential guarantee: the pruned space is only required to preserve \emph{at least one} optimal plan, permitting more aggressive pruning.

Strong landmarks and strong stubborn sets both satisfy a \emph{universal} condition, whereas the weak landmark sets and weak stubborn sets both satisfy an \emph{existential} condition. Every strong stubborn set is also a weak one, and every set of (strong) state landmarks is also a weak landmark set. GONDOR's beacons are not landmarks, but they are a weak landmark set which suffices for path reconstruction.

\section{coverage results}
\label{coverage results full section}
Below are the per-domain coverage results

\begin{table}[!ht]
\centering
\begin{tabular}{@{}l|rr|rr@{}}
 & \multicolumn{2}{|c}{GBFS} & \multicolumn{2}{|c}{GONDOR}\\
Domain & no-BF & BF & no-BF & BF \\
\midrule
Block Grouping (20)  & 19 & 19 & 19 & 19 \\
Counters       (20)  & 10 & 10 & \textbf{14} & \textbf{14} \\
Delivery       (20)  & 18 & 18 & \textbf{19} & 18 \\
Drone          (20)  & 19 & 19 & 19 & \textbf{20} \\
Expedition     (20)  & 4  &  4 &  4 &  4 \\
Farming        (20)  & 20 & 20 & 20 & 20 \\
FO-Counters    (20)  & 7  &  7 &  7 &  7 \\
FO-Farming     (20)  & 20 & 20 & 20 & 20 \\
FO-Sailing     (20)  & 20 & 20 & 20 & 20 \\
Hydropower     (20)  & 20 & 20 & 20 & 20 \\
Market Trader  (20)  & 20 & 20 & 20 & 20 \\
Pacman         (20)  & 16 & 16 & 16 & 16 \\
Pathways       (20)  & 1  & 1  & 1  & 2  \\
Plant Watering (20)  & 20 & 20 & 20 & 20 \\
Rover          (20)  & 4  & 4  & 4  & 4  \\
Sailing        (20)  & 20 & 20 & 20 & 20 \\
Settlers       (20)  & 4  & 4  & 4  & 4  \\
TPP            (20)  & 20 & 20 & 20 & 20 \\
Twin Prime     (20)  & 20 & 20 & 20 & 20 \\
Zenotravel     (20)  & 20 & 20 & 20 & 20 \\
\midrule
$\sum$         (400) & 302 & 302 & 307 & \textbf{308} \\
\end{tabular}
\caption{\label{tab:max coverage full} Max coverage per-domain of each search algorithm on the standard (8 GB RAM) setting}
\end{table}

\begin{table}[!t]
\centering
\begin{tabular}{@{}l|rr|rr@{}}
 & \multicolumn{2}{|c}{GBFS} & \multicolumn{2}{|c}{GONDOR}\\
Domain & no-BF & BF & no-BF & BF \\
\midrule
Block Grouping (200)  & 163  & 158  & 161  & 155 \\
Counters       (200)  & 78   & 78   & 97   & 98 \\
Delivery       (200)  & 143  & 143  & 144  & 144 \\
Drone          (200)  & 149  & 154  & 149  & 157 \\
Expedition     (200)  & 25   & 28   & 22   & 25 \\
Farming        (200)  & 178  & 178  & 178  & 178 \\
FO-Counters    (200)  & 53   & 53   & 54   & 55 \\
FO-Farming     (200)  & 180  & 180  & 180  & 180 \\
FO-Sailing     (200)  & 145  & 149  & 157  & 155 \\
Hydropower     (200)  & 123  & 140  & 124  & 128 \\
Market Trader  (200)  & 127  & 127  & 133  & 133 \\
Pacman         (200)  & 137  & 140  & 143  & 145 \\
Pathways       (200)  & 8    & 8    & 8    & 9  \\
Plant Watering (200)  & 177  & 187  & 184  & 191 \\
Rover          (200)  & 20   & 20   & 20   & 20 \\
Sailing        (200)  & 147  & 149  & 151  & 152 \\
Settlers       (200)  & 23   & 23   & 24   & 23 \\
TPP            (200)  & 133  & 132  & 134  & 133 \\
Twin Prime     (200)  & 172  & 172  & 170  & 170 \\
Zenotravel     (200)  & 158  & 158  & 152  & 155 \\
\midrule
$\sum$         (4000) & 2339 & 2377 & 2385 & 2406 \\
\end{tabular}
\caption{\label{tab:sum coverage full} Sum coverage per-domain of each search algorithm on the standard (8 GB RAM) setting}
\end{table}

\begin{table}[!t]
\centering
\begin{tabular}{@{}l|rr|rr@{}}
 & \multicolumn{2}{|c}{GBFS} & \multicolumn{2}{|c}{GONDOR}\\
Domain & no-BF & BF & no-BF & BF \\
\midrule
Block Grouping (20)  & 19  & 19 & 19 & 19 \\
Counters       (20)  & 8   & 9  & 10 & 10 \\
Delivery       (20)  & 18  & 18 & 18 & 18 \\
Drone          (20)  & 16  & 16 & 17 & 17 \\
Expedition     (20)  & 3   & 3  & 3  & 3  \\
Farming        (20)  & 20  & 20 & 20 & 20 \\
FO-Counters    (20)  & 6   & 6  & 6  & 6  \\
FO-Farming     (20)  & 20  & 20 & 20 & 20 \\
FO-Sailing     (20)  & 20  & 20 & 20 & 20 \\
Hydropower     (20)  & 20  & 20 & 20 & 20 \\
Market Trader  (20)  & 20  & 20 & 20 & 20 \\
Pacman         (20)  & 16  & 16 & 16 & 16 \\
Pathways       (20)  & 1   & 1  & 1  & 1  \\
Plant Watering (20)  & 17  & 20 & 20 & 20 \\
Rover          (20)  & 4   & 4  & 4  & 4  \\
Sailing        (20)  & 20  & 20 & 20 & 20 \\
Settlers       (20)  & 4   & 4  & 4  & 4  \\
TPP            (20)  & 13  & 14 & 18 & 20 \\
Twin Prime     (20)  & 20  & 20 & 20 & 20 \\
Zenotravel     (20)  & 20  & 20 & 20 & 20 \\
\midrule
$\sum$         (400) & 285 & 290 & 296 & \textbf{298} \\
\end{tabular}
\caption{\label{tab:max coverage full lowmem} Max coverage per-domain of each search algorithm on the low memory (512 MB) setting.}
\end{table}

\begin{table}[!t]
\centering
\begin{tabular}{@{}l|rr|rr@{}}
 & \multicolumn{2}{|c}{GBFS} & \multicolumn{2}{|c}{GONDOR}\\
Domain & no-BF & BF & no-BF & BF \\
\midrule
Block Grouping (200)  & 148  & 150 & 140 & 156  \\
Counters       (200)  & 68   & 73  & 73  & 78   \\
Delivery       (200)  & 143  & 143 & 146 & 143  \\
Drone          (200)  & 139  & 139 & 145 & 145  \\
Expedition     (200)  & 17   & 20  & 18  & 19   \\
Farming        (200)  & 169  & 169 & 170 & 171  \\
FO-Counters    (200)  & 45   & 45  & 51  & 51   \\
FO-Farming     (200)  & 171  & 171 & 176 & 180  \\
FO-Sailing     (200)  & 127  & 128 & 133 & 136  \\
Hydropower     (200)  & 108  & 112 & 108 & 112  \\
Market Trader  (200)  & 104  & 104 & 117 & 122  \\
Pacman         (200)  & 116  & 122 & 125 & 129  \\
Pathways       (200)  & 7    & 8   & 8   & 8    \\
Plant Watering (200)  & 89   & 118 & 106 & 154  \\
Rover          (200)  & 20   & 20  & 20  & 20   \\
Sailing        (200)  & 115  & 121 & 125 & 127  \\
Settlers       (200)  & 14   & 13  & 21  & 23   \\
TPP            (200)  & 79   & 81  & 91  & 103  \\
Twin Prime     (200)  & 172  & 172 & 171 & 170  \\
Zenotravel     (200)  & 149  & 151 & 155 & 149  \\
\midrule
$\sum$         (4000) & 2000 & 2060 & 2099 & 2196 \\
\end{tabular}
\caption{\label{tab:sum coverage full lowmem} Sum coverage per-domain of each search algorithm on the low memory (512 MB) setting.}
\end{table}

\FloatBarrier

\section{scatterplots}
Below are the scatterplots of run time, expanded nodes, generated nodes, and solution length of $GONDOR_{BF}$ as well as $GONDOR$ and $GBFS_{BF}$ against the baseline. We note that expanded and generated notes are not directly comparable between GONDOR and non-GONDOR methods since GONDOR may expand or generate the same node multiple times. We nevertheless present them here for completion's sake.

Some artifacts appear in expanded nodes = 1000, 2000, 3000. These are logging artifacts, as for failures we only log every 1000 expansions.

\FloatBarrier

\subsection{run times}

\begin{figure}[!ht]
    \includegraphics[width=0.48\textwidth, height=0.28\textheight, keepaspectratio]{figures_new/GONDOR_BF/reg/gpt-5.1=high_w_GONDOR_BF_VS_gpt-5.1=high_ON_run_time_gondorstyle.pdf}
    \includegraphics[width=0.48\textwidth, height=0.28\textheight, keepaspectratio]{figures_new/GONDOR_BF/lowmem/gpt-5.1=high_w_GONDOR_BF_VS_gpt-5.1=high_ON_run_time_gondorstyle.pdf}
    \caption{Runtime comparison of each heuristic on each instance of $GONDOR_{BF}$ vs the $GBFS$ baseline. Each X represents one search with each method, and is colored by which search algorithm found a solution successfully. $GONDOR_{BF}$ wins for points below the diagonal Points below the diagonal and points in green, and loses above the diagonal and for points in orange. The same experiment is repeated with 8GB RAM (top) and 512 MB (bottom)}
\end{figure}

\begin{figure}[!ht]
    \includegraphics[width=0.48\textwidth, height=0.28\textheight, keepaspectratio]{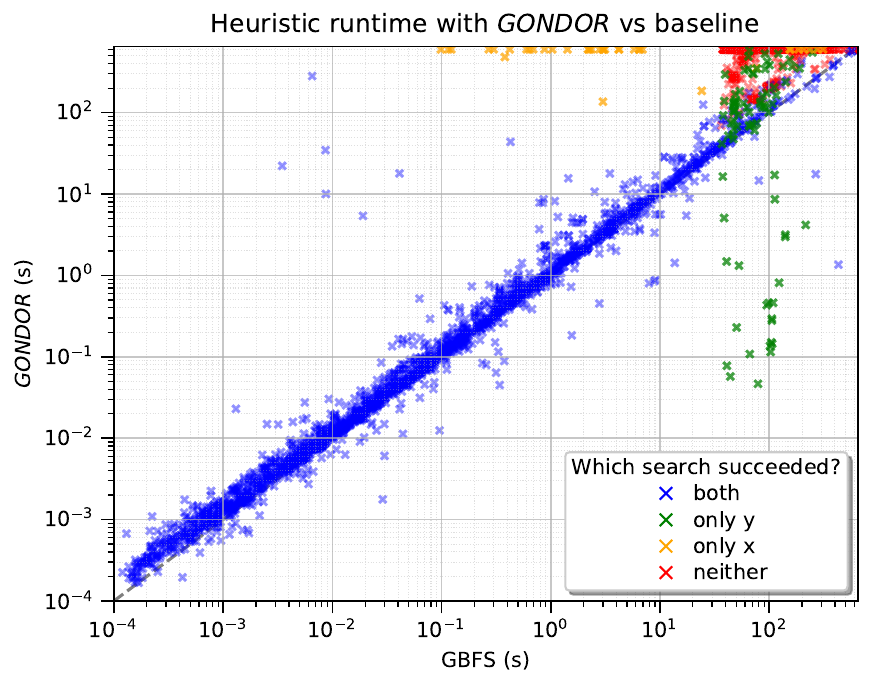}
    \includegraphics[width=0.48\textwidth, height=0.28\textheight, keepaspectratio]{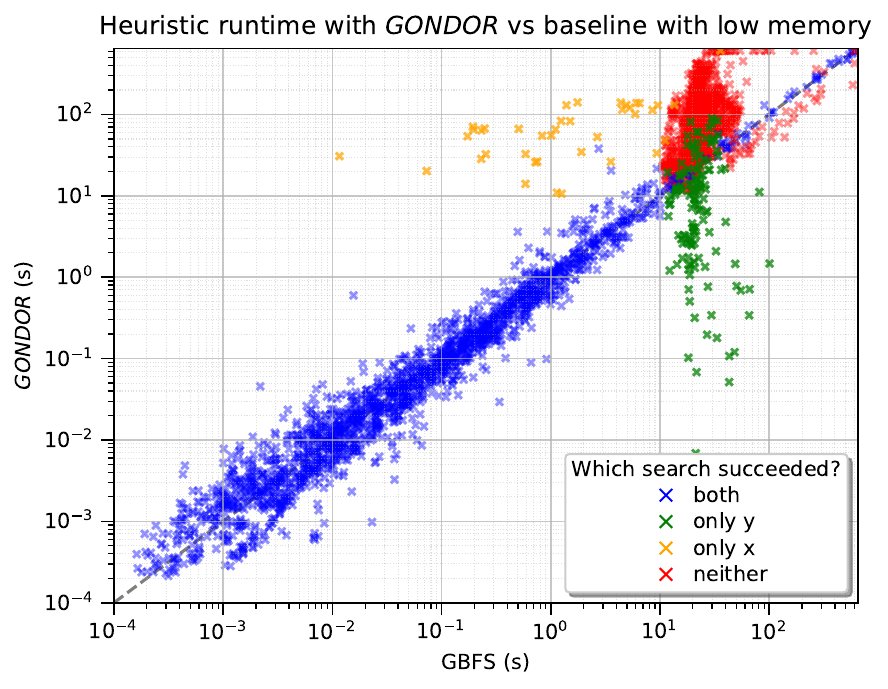}
    \caption{Runtime comparison of each heuristic on each instance of $GONDOR$ vs the $GBFS$ baseline. Each X represents one search with each method, and is colored by which search algorithm found a solution successfully. $GONDOR$ wins for points below the diagonal Points below the diagonal and points in green, and loses above the diagonal and for points in orange. The same experiment is repeated with 8GB RAM (top) and 512 MB (bottom)}
\end{figure}

\begin{figure}[!ht]
    \includegraphics[width=0.48\textwidth, height=0.28\textheight, keepaspectratio]{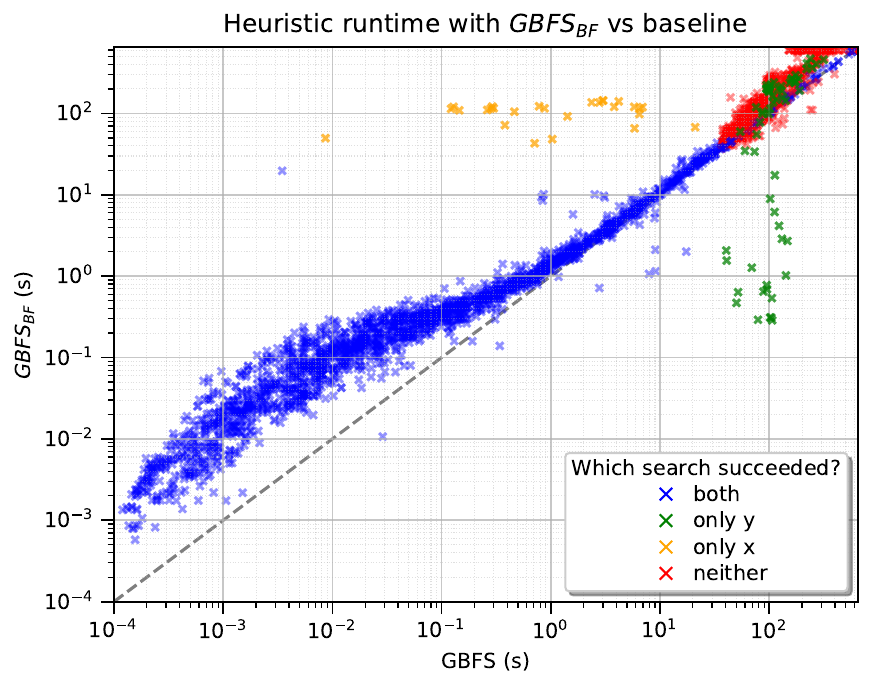}
    \includegraphics[width=0.48\textwidth, height=0.28\textheight, keepaspectratio]{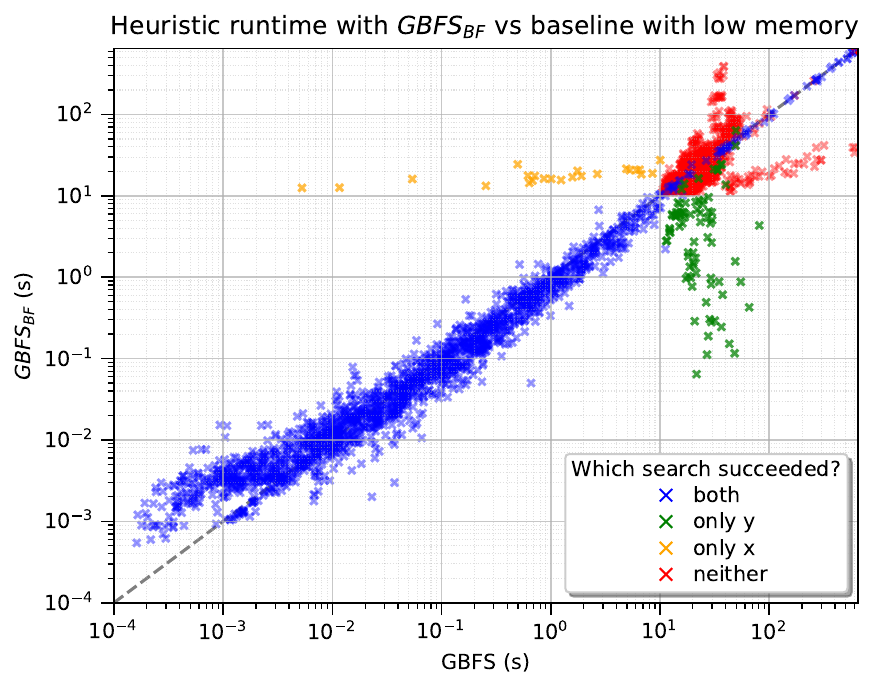}
    \caption{Runtime comparison of each heuristic on each instance of $GBFS_{BF}$ vs the $GBFS$ baseline. Each X represents one search with each method, and is colored by which search algorithm found a solution successfully. $GBFS_{BF}$ wins for points below the diagonal Points below the diagonal and points in green, and loses above the diagonal and for points in orange. The same experiment is repeated with 8GB RAM (top) and 512 MB (bottom)}
\end{figure}

\FloatBarrier
\vspace{30em}

\subsection{expanded nodes}

\begin{figure}[!ht]
    \includegraphics[width=0.48\textwidth, height=0.28\textheight, keepaspectratio]{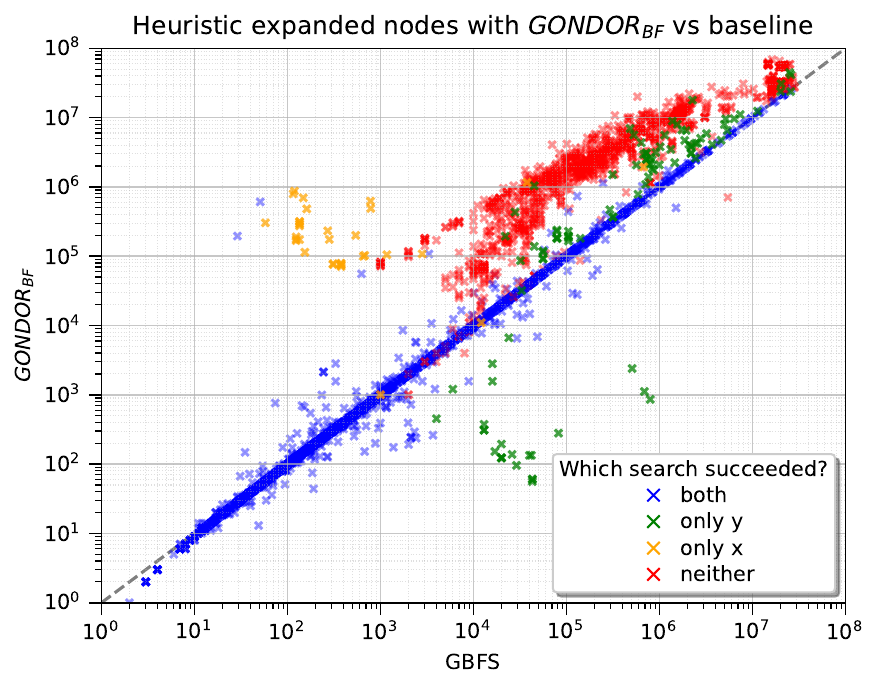}
    \includegraphics[width=0.48\textwidth, height=0.28\textheight, keepaspectratio]{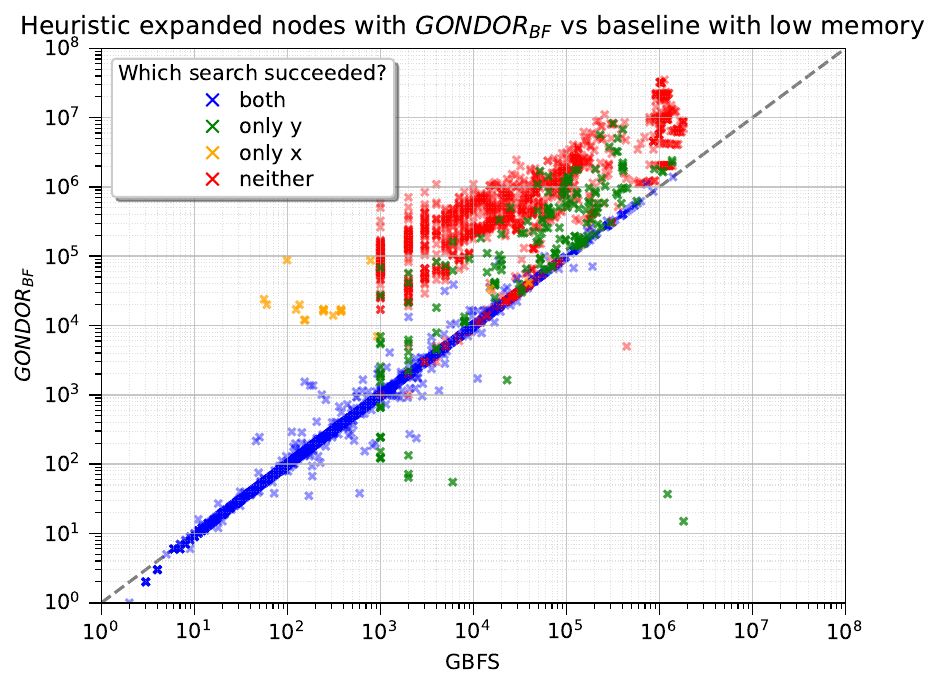}
    \caption{Expanded nodes comparison of each heuristic on each instance of $GONDOR_{BF}$ vs the $GBFS$ baseline. Each X represents one search with each method, and is colored by which search algorithm found a solution successfully. $GONDOR_{BF}$ wins for points below the diagonal Points below the diagonal and points in green, and loses above the diagonal and for points in orange. The same experiment is repeated with 8GB RAM (top) and 512 MB (bottom)}
\end{figure}

\begin{figure}[!ht]
    \includegraphics[width=0.48\textwidth, height=0.28\textheight, keepaspectratio]{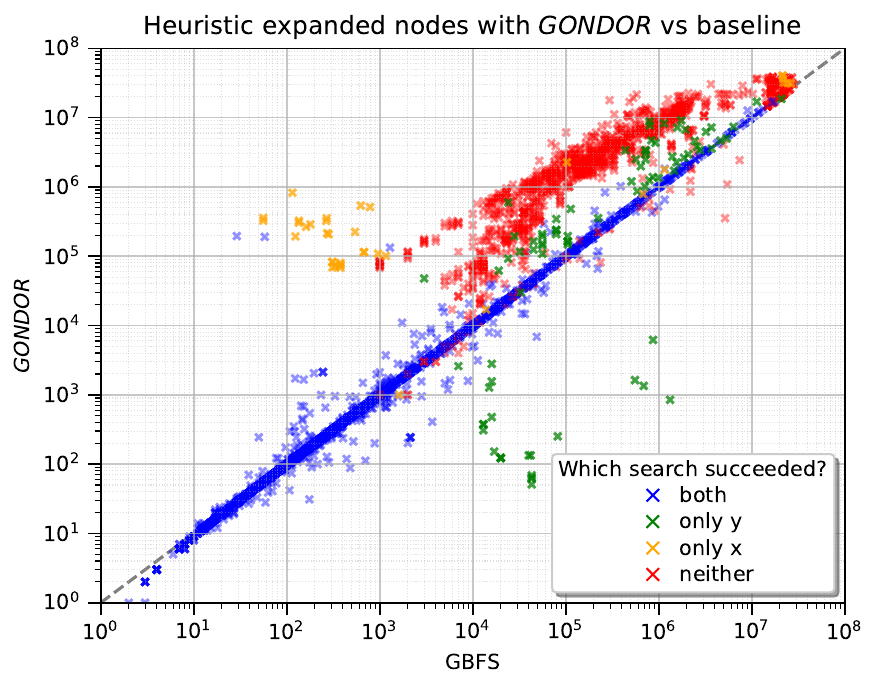}
    \includegraphics[width=0.48\textwidth, height=0.28\textheight, keepaspectratio]{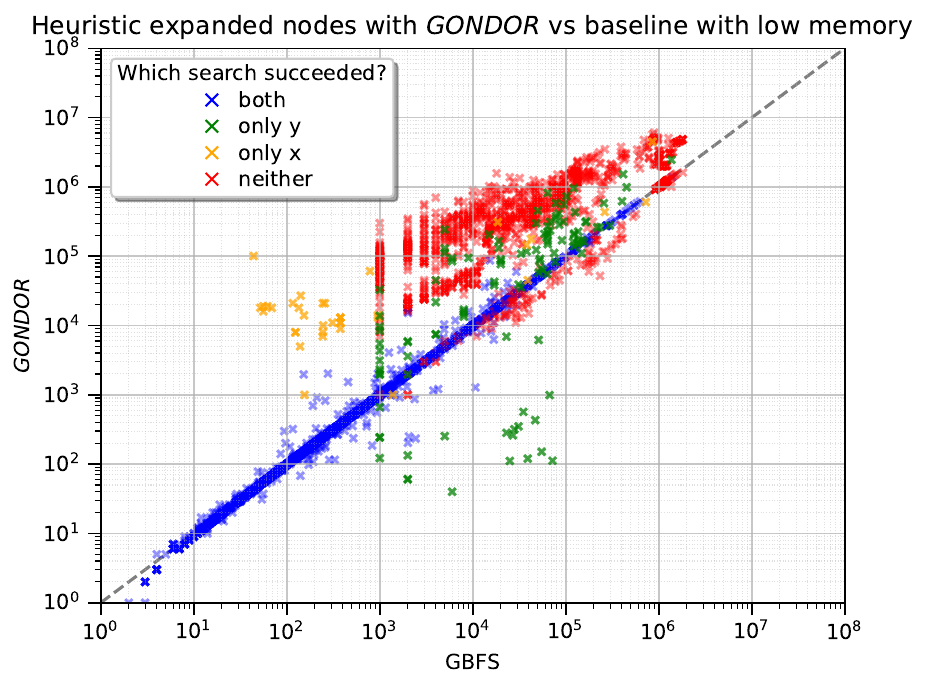}
    \caption{Expanded nodes comparison of each heuristic on each instance of $GONDOR$ vs the $GBFS$ baseline. Each X represents one search with each method, and is colored by which search algorithm found a solution successfully. $GONDOR$ wins for points below the diagonal Points below the diagonal and points in green, and loses above the diagonal and for points in orange. The same experiment is repeated with 8GB RAM (top) and 512 MB (bottom)}
\end{figure}

\begin{figure}[!ht]
    \includegraphics[width=0.48\textwidth, height=0.28\textheight, keepaspectratio]{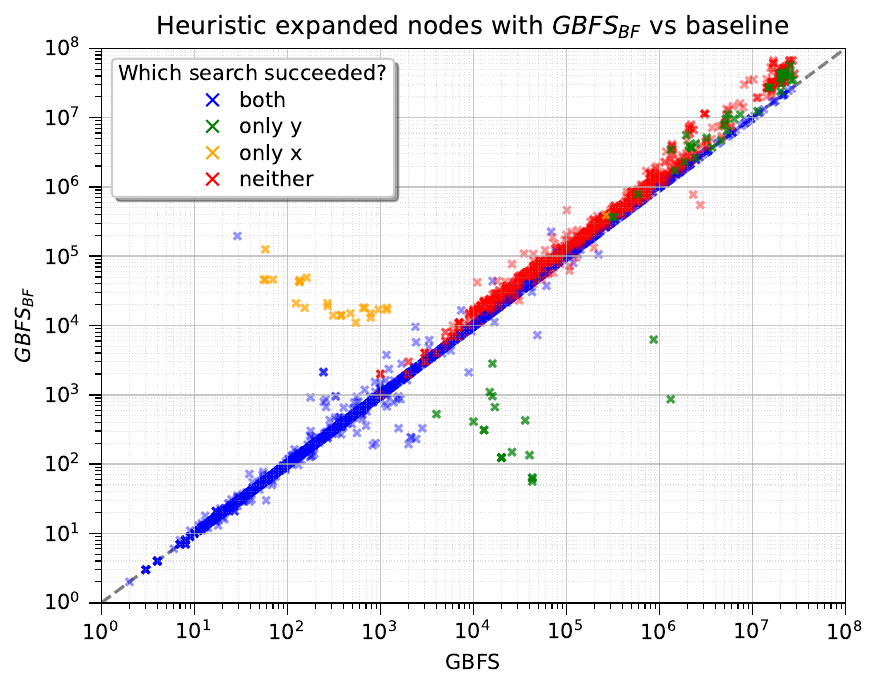}
    \includegraphics[width=0.48\textwidth, height=0.28\textheight, keepaspectratio]{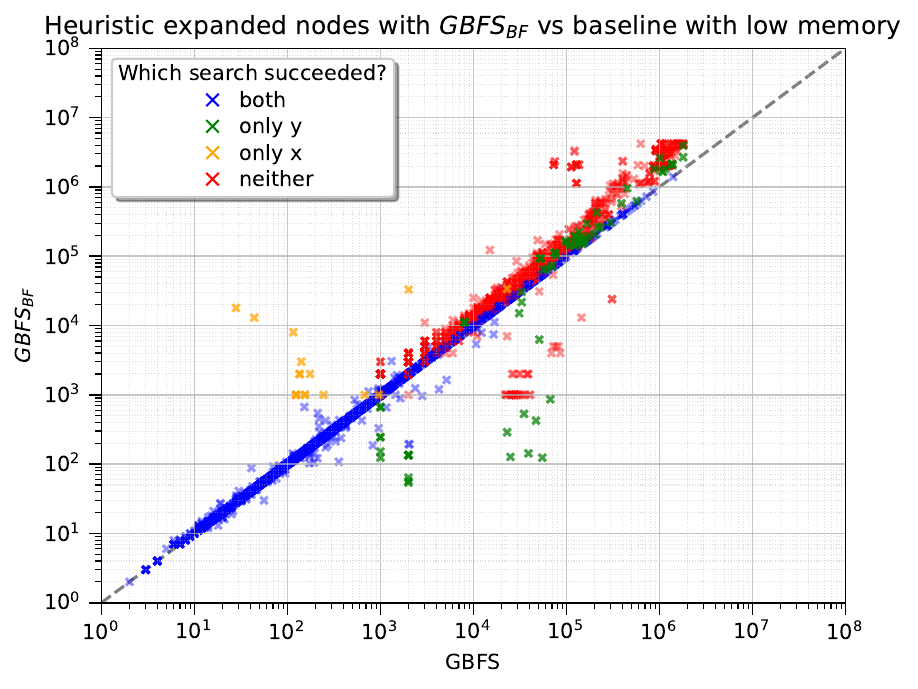}
    \caption{Expanded nodes comparison of each heuristic on each instance of $GBFS_{BF}$ vs the $GBFS$ baseline. Each X represents one search with each method, and is colored by which search algorithm found a solution successfully. $GBFS_{BF}$ wins for points below the diagonal Points below the diagonal and points in green, and loses above the diagonal and for points in orange. The same experiment is repeated with 8GB RAM (top) and 512 MB (bottom)}
\end{figure}

\FloatBarrier

\subsection{generated nodes}

\begin{figure}[!ht]
    \includegraphics[width=0.48\textwidth, height=0.28\textheight, keepaspectratio]{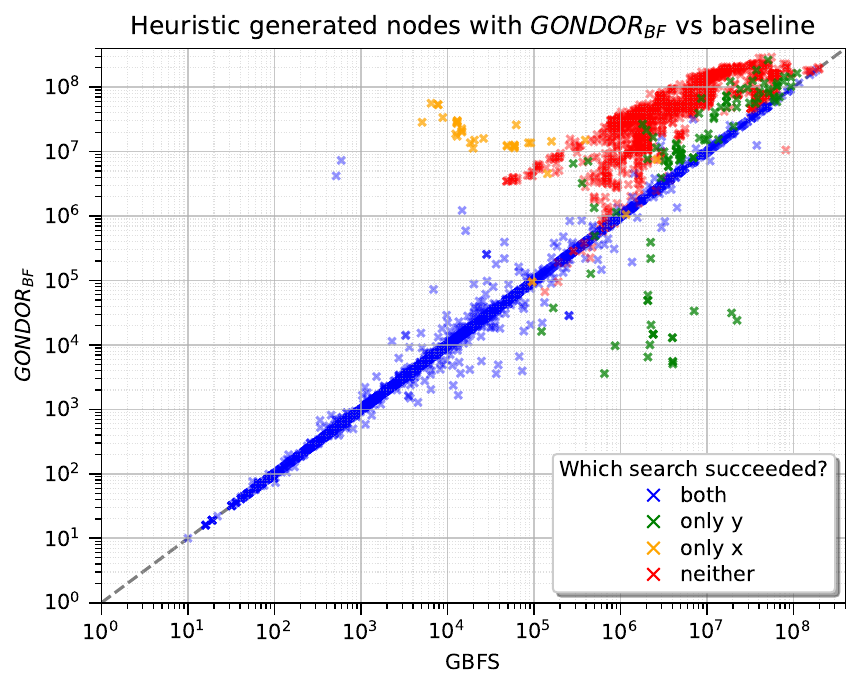}
    \includegraphics[width=0.48\textwidth, height=0.28\textheight, keepaspectratio]{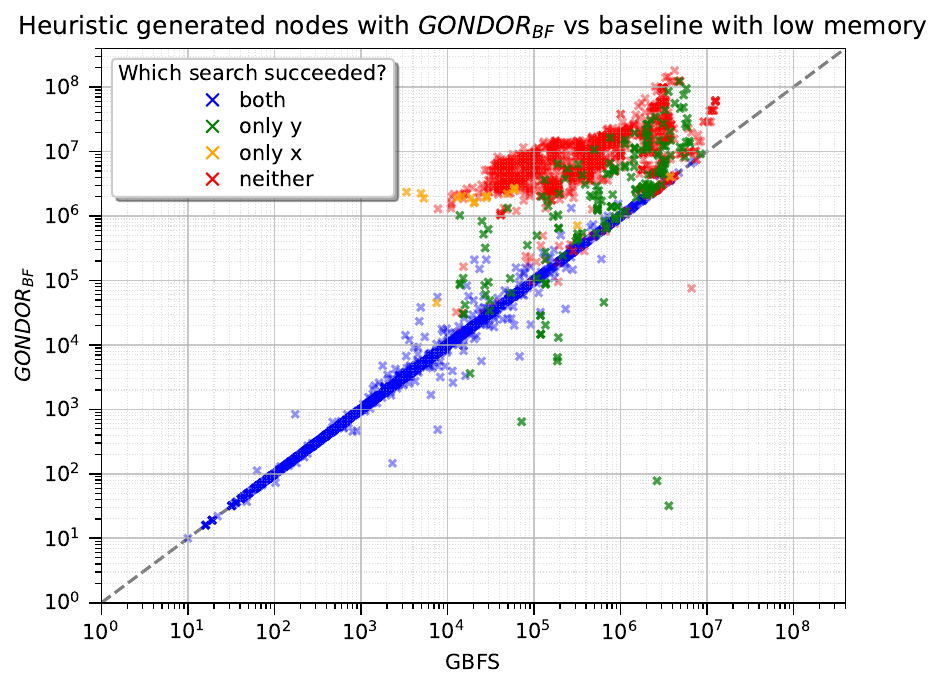}
    \caption{Generated nodes comparison of each heuristic on each instance of $GONDOR_{BF}$ vs the $GBFS$ baseline. Each X represents one search with each method, and is colored by which search algorithm found a solution successfully. $GONDOR_{BF}$ wins for points below the diagonal Points below the diagonal and points in green, and loses above the diagonal and for points in orange. The same experiment is repeated with 8GB RAM (top) and 512 MB (bottom)}
\end{figure}

\begin{figure}[!ht]
    \includegraphics[width=0.48\textwidth, height=0.28\textheight, keepaspectratio]{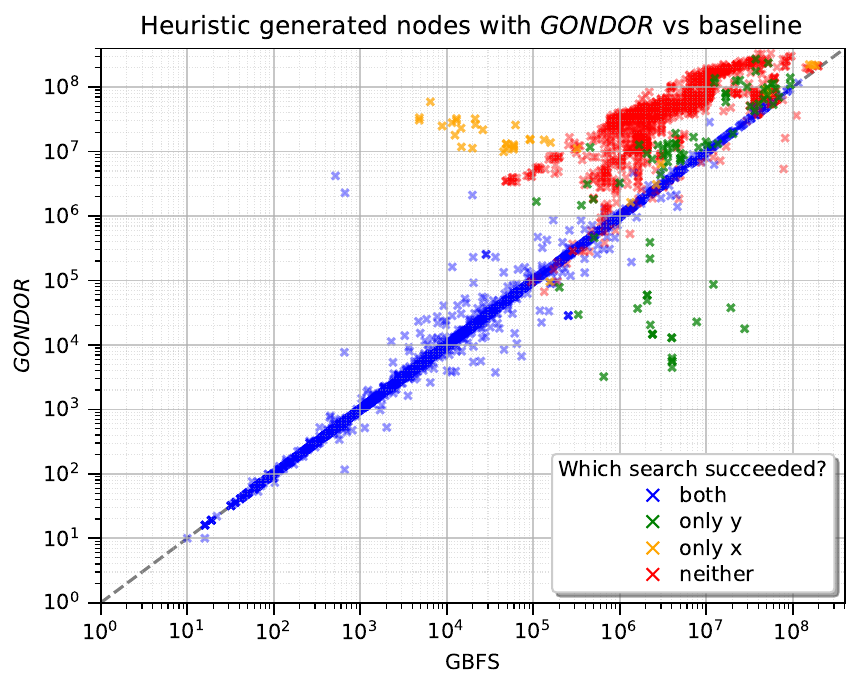}
    \includegraphics[width=0.48\textwidth, height=0.28\textheight, keepaspectratio]{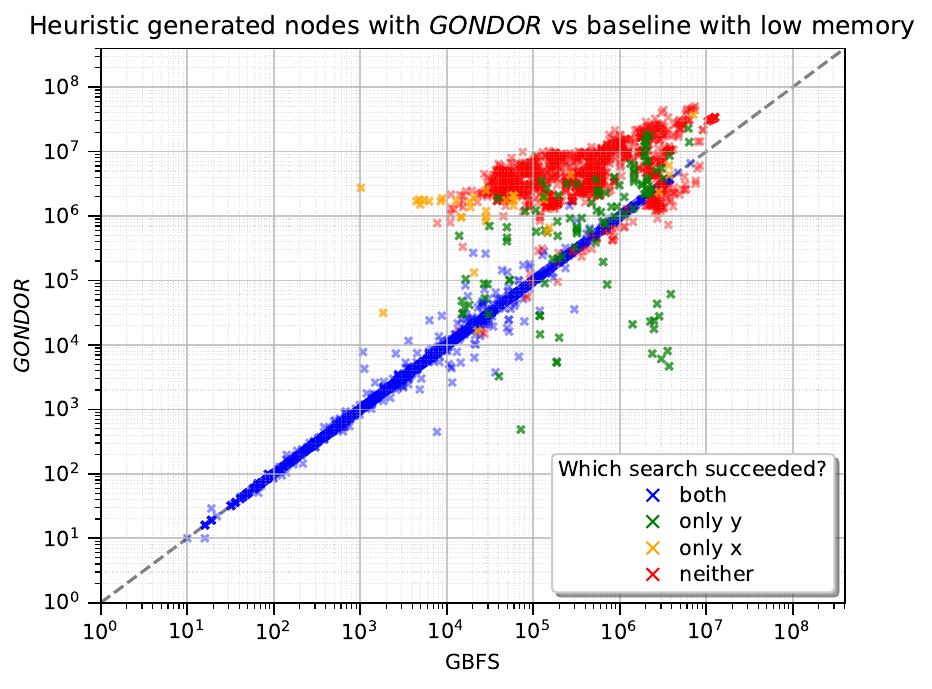}
    \caption{Generated nodes comparison of each heuristic on each instance of $GONDOR$ vs the $GBFS$ baseline. Each X represents one search with each method, and is colored by which search algorithm found a solution successfully. $GONDOR$ wins for points below the diagonal Points below the diagonal and points in green, and loses above the diagonal and for points in orange. The same experiment is repeated with 8GB RAM (top) and 512 MB (bottom)}
\end{figure}

\begin{figure}[!ht]
    \includegraphics[width=0.48\textwidth, height=0.28\textheight, keepaspectratio]{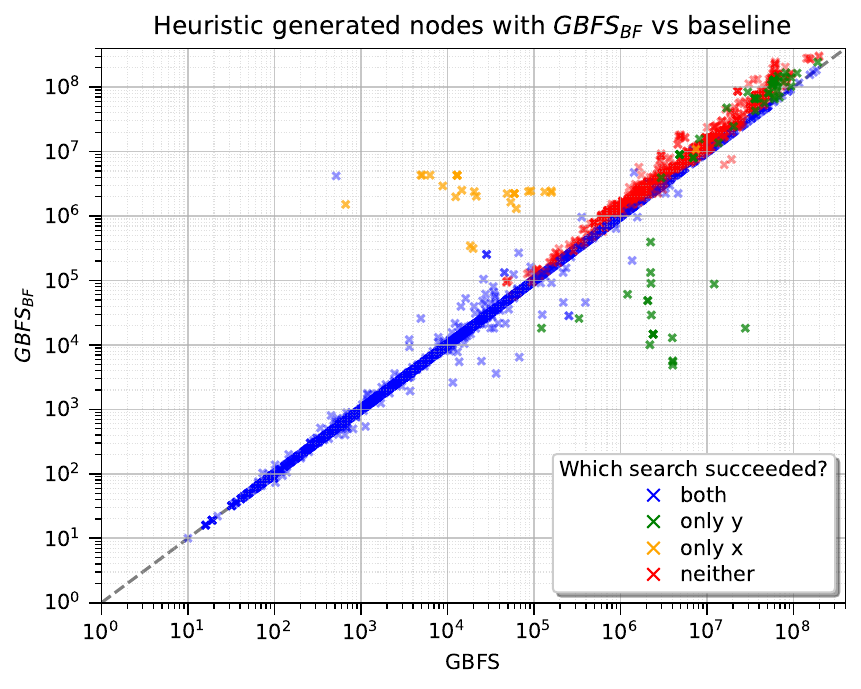}
    \includegraphics[width=0.48\textwidth, height=0.28\textheight, keepaspectratio]{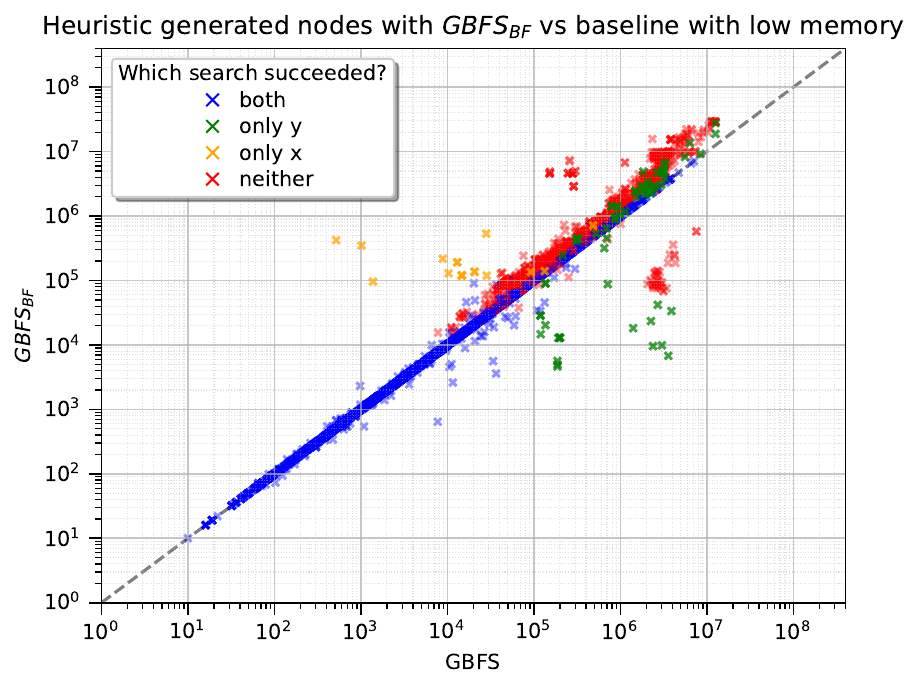}
    \caption{Generated nodes comparison of each heuristic on each instance of $GBFS_{BF}$ vs the $GBFS$ baseline. Each X represents one search with each method, and is colored by which search algorithm found a solution successfully. $GBFS_{BF}$ wins for points below the diagonal Points below the diagonal and points in green, and loses above the diagonal and for points in orange. The same experiment is repeated with 8GB RAM (top) and 512 MB (bottom)}
\end{figure}

\FloatBarrier

\subsection{solution lengths}

\begin{figure}[!ht]
    \includegraphics[width=0.48\textwidth, height=0.28\textheight, keepaspectratio]{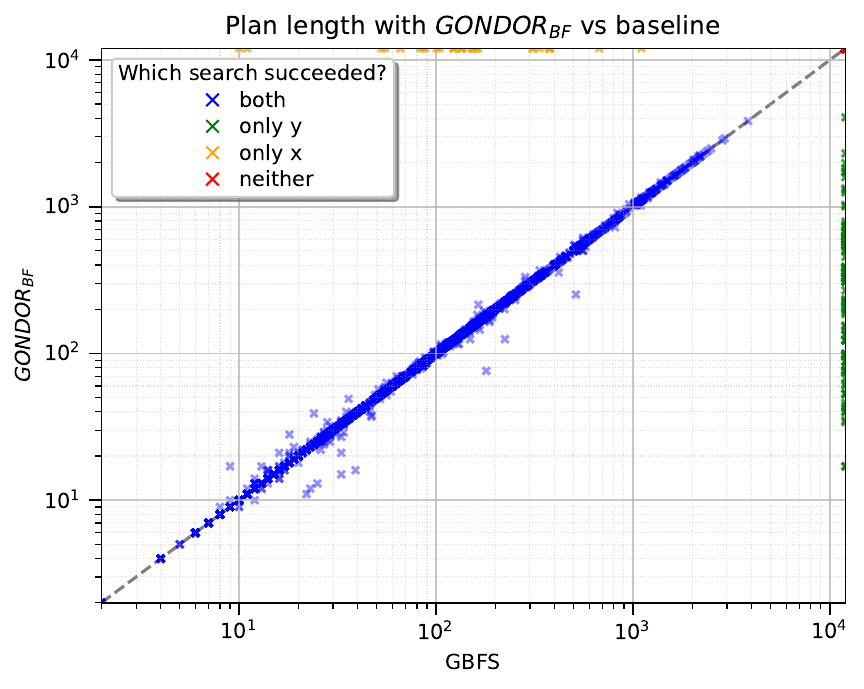}
    \includegraphics[width=0.48\textwidth, height=0.28\textheight, keepaspectratio]{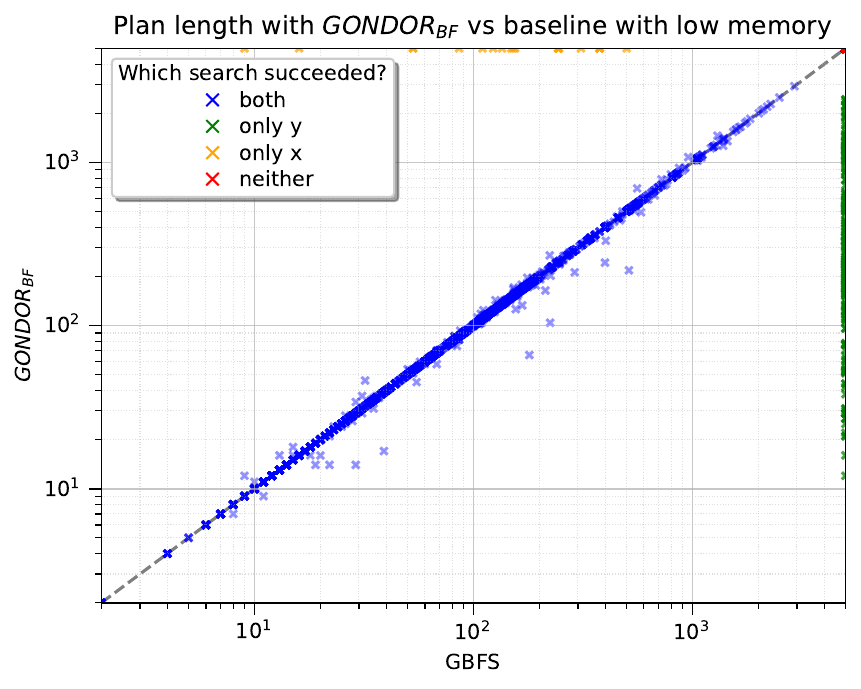}
    \caption{Solution length comparison of each heuristic on each instance of $GONDOR_{BF}$ vs the $GBFS$ baseline. Each X represents one search with each method, and is colored by which search algorithm found a solution successfully, while failed searches are clipped to max. $GONDOR_{BF}$ wins for points below the diagonal Points below the diagonal and points in green, and loses above the diagonal and for points in orange. The same experiment is repeated with 8GB RAM (top) and 512 MB (bottom)}
\end{figure}

\begin{figure}[!ht]
    \includegraphics[width=0.48\textwidth, height=0.28\textheight, keepaspectratio]{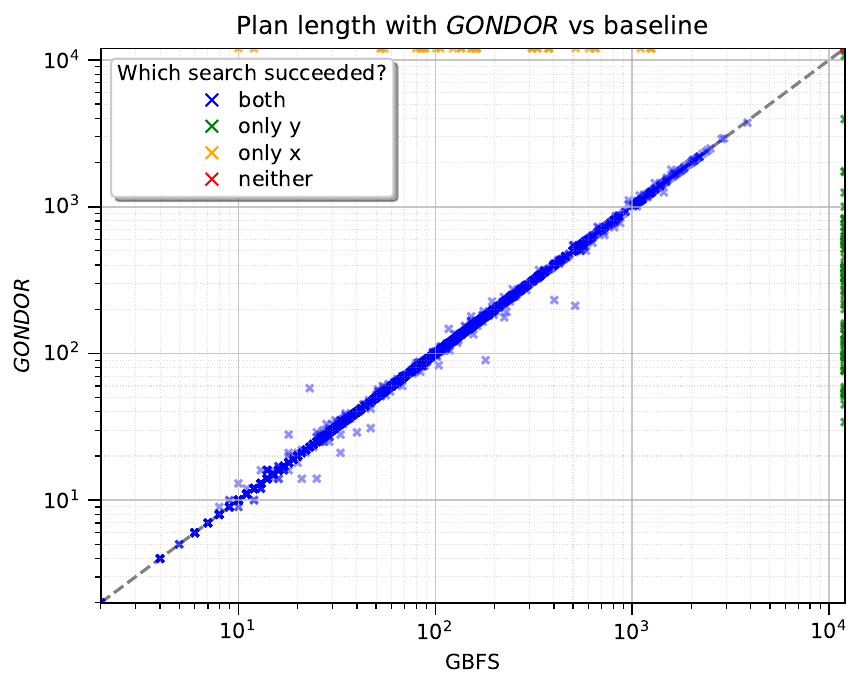}
    \includegraphics[width=0.48\textwidth, height=0.28\textheight, keepaspectratio]{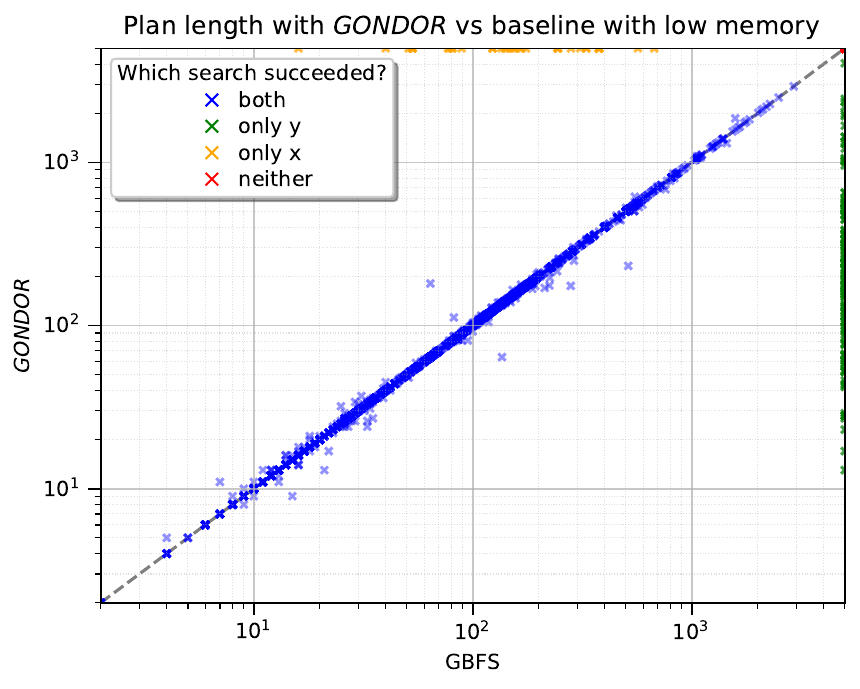}
    \caption{Solution length comparison of each heuristic on each instance of $GONDOR$ vs the $GBFS$ baseline. Each X represents one search with each method, and is colored by which search algorithm found a solution successfully, while failed searches are clipped to max. $GONDOR$ wins for points below the diagonal Points below the diagonal and points in green, and loses above the diagonal and for points in orange. The same experiment is repeated with 8GB RAM (top) and 512 MB (bottom)}
\end{figure}

\begin{figure}[!ht]
    \includegraphics[width=0.48\textwidth, height=0.28\textheight, keepaspectratio]{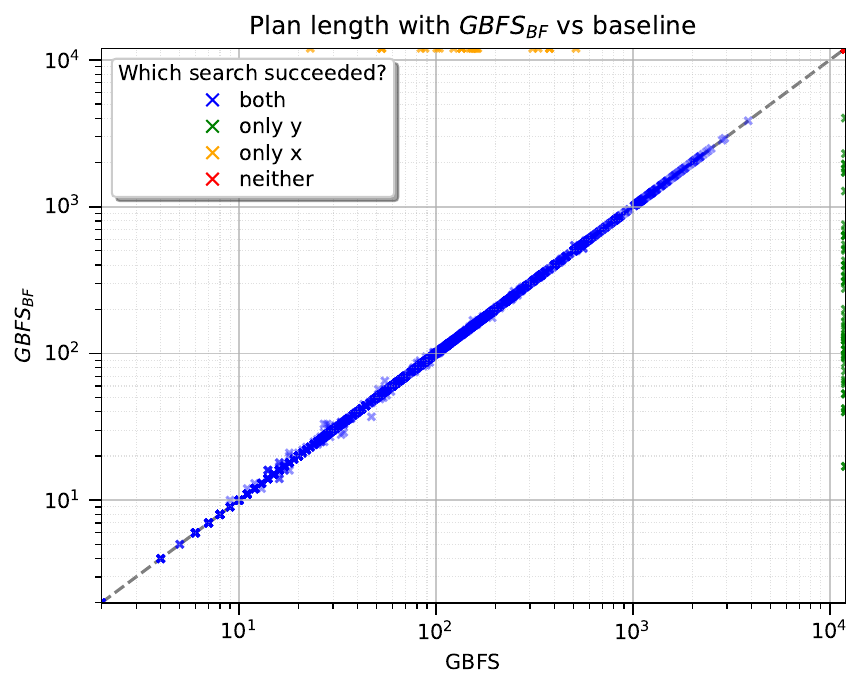}
    \includegraphics[width=0.48\textwidth, height=0.28\textheight, keepaspectratio]{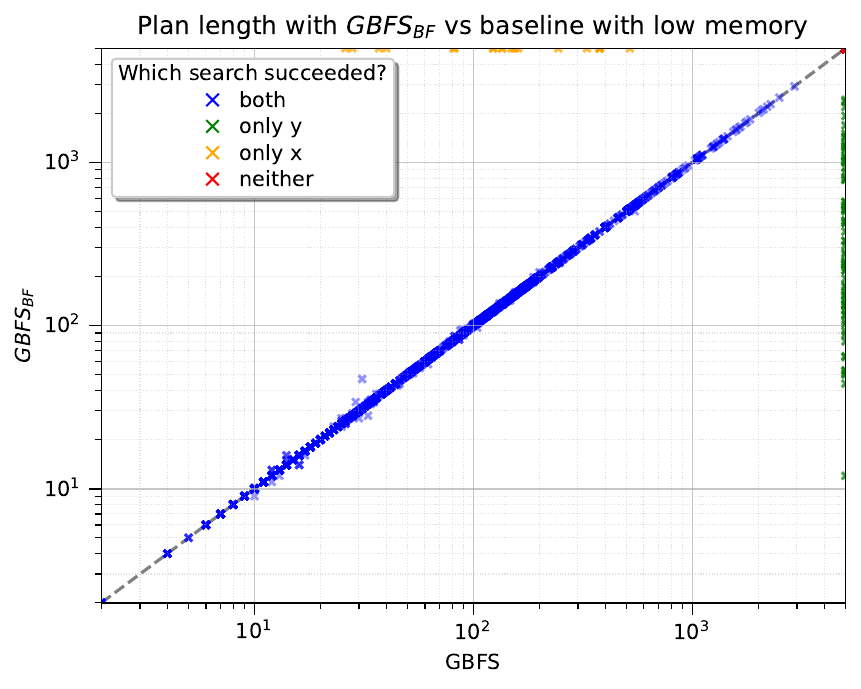}
    \caption{Solution length comparison of each heuristic on each instance of $GBFS_{BF}$ vs the $GBFS$ baseline. Each X represents one search with each method, and is colored by which search algorithm found a solution successfully, while failed searches are clipped to max. $GBFS_{BF}$ wins for points below the diagonal Points below the diagonal and points in green, and loses above the diagonal and for points in orange. The same experiment is repeated with 8GB RAM (top) and 512 MB (bottom)}
\end{figure}

\FloatBarrier

\bibliography{aaai2026}